\newtheorem{theoremn}{Theorem}
\newtheorem{lemman}{Lemma}
\title[Fast Randomized PCA for Sparse Data]{Fast Randomized PCA for Sparse Data}
   \author{\Name{Xu Feng} \Email{fx17@mails.tsinghua.edu.cn}\\
   \Name{Yuyang Xie} \Email{xyy18@mails.tsinghua.edu.cn}\\
   \Name{Mingye Song} \Email{songmy16@mails.tsinghua.edu.cn}\\
   \Name{Wenjian Yu}\thanks{Corresponding author. This work is supported by the National Nature Science Foundation of China under Grant 61872206.} \Email{yu-wj@tsinghua.edu.cn}\\
   \Name{Jie Tang} \Email{jietang@tsinghua.edu.cn}\\
   \addr  BNRist, Department of Computer Science and Technology, Tsinghua University, Beijing 100084, China
  }
\begin{document}

\maketitle

\begin{abstract}
Principal component analysis (PCA) is widely used for dimension reduction and embedding of real data in social network analysis, information retrieval, and natural language processing, etc. In this work we propose a fast randomized PCA algorithm for processing large sparse data. The algorithm has similar accuracy to the basic randomized SVD (rPCA) algorithm  \citep{Halko2011Finding}, but is largely optimized for sparse data. It also has good flexibility to trade off runtime against accuracy for practical usage. Experiments on real data show that the proposed algorithm is up to 9.1X faster than the basic rPCA algorithm without accuracy loss, and is up to 20X faster than the \texttt{svds} in Matlab with little error. The algorithm computes the first 100 principal components of a large information retrieval data with 12,869,521 persons and 323,899 keywords in less than 400 seconds on a 24-core machine, while all conventional methods fail due to the out-of-memory issue.

\end{abstract}
\begin{keywords}
Principle Component Analysis; Singular Value Decomposition; Randomized Algorithms
\end{keywords}

\section{Introduction}

In machine learning applications, principal component analysis (PCA) is widely used for dimension reduction of input data. It often behaves as a preprocessing step for no matter supervised or unsupervised learning methods. For the problems of social network analysis, information retrieval, natural language processing (NLP), and even recommender system, where input data matrix is usually a sparse one, PCA or the equivalent truncated SVD is also applied. For example, the latent semantic analysis (LSA) \citep{1990Indexing}, which builds dense representation of document in NLP, includes PCA as a major step. However, the application of PCA to real-world problems which require processing large data accurately, often costs prohibitive computational time. Accelerating the PCA for large sparse data is of an absolute necessity. 

The standard method for performing PCA is calculating truncated singular value decomposition (SVD). For sparse matrix, this is usually implemented with $\texttt{svds}$ in Matlab \citep{svds}, or $\texttt{lansvd}$ in PROPACK \citep{propack} which is an accelerated version of $\texttt{svds}$. However, if the dimensions of matrix are large and more than dozens of principle components/directions are needed, these conventional methods would induce large computational expense or even fail due to excessive memory cost. An alternative is the randomized method for PCA, which has gained a lot of attention in recent years. The idea of randomized matrix method is mainly using random projection to identify the subspace capturing the dominant actions of a matrix \citep{Halko2011Finding,yu2018efficient}. Then, a near-optimal low-rank decomposition of the matrix can be computed, so that we can further obtain an approximate PCA. A comprehensive presentation of the relevant techniques and theories is in \cite{Halko2011Finding}. This randomized technique has been extended to compute PCA of data sets that are too large to be stored in RAM \citep{yu2017single}, or to speed up the distributed PCA \citep{Woodruff2014}. For general SVD or PCA computation, the approaches based on it have also been proposed \citep{rsvdpack,alg971}. They outperform the conventional techniques for calculating a few of principle components. Recently, a compressed SVD (cSVD) algorithm was proposed in \cite{Erichson_2017_ICCV}, which is based on a variant of the method in \cite{Halko2011Finding} but runs faster for image and video processing applications. Another idea for computing PCA of large data is performing eigenvalue decomposition to the product of the data matrix's transpose and itself. However, it only has benefit while handling low-dimensional data (less than several thousands in dimension). 

Although there are a lot of work on randomized PCA techniques, they are mostly for processing dense data. Compared with the deterministic methods, they involve the same or fewer floating-point operations (\emph{flops}), and are more efficient for large high-dimensional dense data by exploiting modern computing architectures. While for large sparse data in real world, these benefits may not exist. Investigating the randomized PCA technique for large sparse data and comparing it with other existing techniques are of great interest.

In this work, we first analyze the adaptability of some acceleration skills for the basic randomized PCA (rPCA) algorithm to sparse data, followed by theoretical proofs and computational cost analysis. Then, we propose a modified power iteration scheme which allows odd number of passes over data matrix and thus provides more flexible trade-off between runtime and accuracy. We also devise a technique to efficiently handle the data matrix with more columns than rows, which is ignored in existing work. To wrap them up, we propose a fast randomized PCA algorithm for sparse data (\emph{frPCA}) and its variant algorithm \emph{frPCAt}, suitable for the data matrices with more columns and more rows, respectively. Theoretical analysis is performed to reveal how the efficiency of the proposed algorithms varies with the sparsity of data, the power iteration parameter, and the number of principle components wanted. In the section of experimental results, we first validate the accuracy and efficiency of the proposed algorithms with some synthetic data. The results show that it is up to 9.1X faster than the basic rPCA algorithm and 20X faster than \texttt{svds}, with negligible loss of accuracy. Then, real large data in social network, information retrieval and recommender system problems are tested. The results show the proposed algorithm is up to 8.7X faster than the basic rPCA. And, it successfully handles the largest case in less than 400 seconds and with 23 GB memory, while the \texttt{svds} fails due to out-of-memory issue (requesting more than 32 GB memory).

For reproducibility, the codes and test data in this work will
be shared on GitHub (\url{https://github.com/XuFengthucs/frPCA_sparse}).

\section{Preliminaries}
In algorithm description, the Matlab conventions are used for specifying row/column indices of a matrix and some operations on sparse matrix.
\subsection{Singular Value Decomposition and PCA}
A standard method for performing PCA is to calculate truncated SVD. The SVD of $\mathbf{A}\in\mathbb{R}^{m\times n}$ is:
\begin{equation}
\mathbf{A}=\mathbf{U\Sigma V}^{\mathrm{T}},
\end{equation}
where $\mathbf{U}=[\mathbf{u}_1, \mathbf{u}_2, \cdots]$ and $\mathbf{V}=[\mathbf{v}_1, \mathbf{v}_2, \cdots]$ are orthogonal matrices which represent the left and right singular vectors, respectively. The diagonal matrix $\mathbf{\Sigma}$ contains the singular values $(\sigma_1, \sigma_2, \cdots)$ of $\mathbf{A}$ in descending order. Suppose that $\mathbf{U}_k$ and $\mathbf{V}_k$ are matrices with the first $k$ columns of $\mathbf{U}$ and $\mathbf{V}$ respectively, and $\mathbf{\Sigma}_k$ is the diagonal matrix containing the first $k$ singular values of $\mathbf{A}$. Then, the truncated SVD of $\mathbf{A}$ can be represented as:
\begin{equation}
\mathbf{A}\approx \mathbf{A}_k = \mathbf{U}_k\mathbf{\Sigma}_k\mathbf{V}_k^{\mathrm{T}}.
\end{equation}
Notice that $\mathbf{A}_k$ is the best rank-$k$ approximation of the initial matrix $\mathbf{A}$ in either spectral norm of Frobenius norm \citep{eckart1936}.

The approximation properties of SVD explain the  equivalence between the truncated SVD and PCA. Suppose each row of matrix $\mathbf{A}$ is an observed data. The matrix is assumed to be
centered, i.e., the mean of all rows is a zero row vector. Then, the leading left singular vectors {$\mathbf{u}_i$} are the principal components. Particularly, $\mathbf{u}_1$ is the first principal component.

The built-in function \texttt{svds} in Matlab is a common choice to compute truncated SVD. It is based on a Krylov subspace iterative method and is especially efficient for sparse matrix. For a dense matrix $\mathbf{A}\in\mathbb{R}^{m\times n}$, \texttt{svds} costs $O(mnk)$ \emph{flops} for computing rank-$k$ truncated SVD. The cost becomes $O(nnz(\mathbf{A})k)$ \emph{flops} when $\mathbf{A}$ is sparse, where $nnz(\cdot)$ means the number of nonzero elements. $\texttt{lansvd}$ in PROPACK \citep{propack} is also an efficient program, written in Matlab/Fortran, for computing the dominant singular values/vectors of a sparse matrix. $\texttt{lansvd}$ can cost two to three times less CPU time than \texttt{svds}. However, there is no parallel version of $\texttt{lansvd}$, so that its actual runtime on a modern computer is often longer than that of \texttt{svds}.

\subsection{The Basic Randomized Algorithm for PCA}
Previous work has shown that the randomized methods have advantages for solving the least linear squares problem and low-rank matrix approximation \citep{drineas2016randnla}. The method for low-rank approximation mainly relies on the random projection to identify the subspace capturing the dominant actions of matrix $\mathbf{A}$, which can be realized by multiplying $\mathbf{A}$ with a random matrix on its right or left side to obtain the subspace's orthogonal basis matrix $\mathbf{Q}$. Then, the low-rank approximation  in form of $\mathbf{QB}$ is computed and further results in
 the approximate truncated SVD \citep{Halko2011Finding}. Because $\mathbf{Q}$ has much fewer columns than $\mathbf{A}$, this method reduces the computational time. It derives the basic randomized PCA (rPCA) algorithm, described as Algorithm 1.
\begin{algorithm}
    \caption{basic rPCA}
    \label{alg1}
    \begin{algorithmic}[1]
      \REQUIRE $\mathbf{A}\in\mathbb{R}^{m\times n}$, rank parameter $k$, power parameter $p$
      \ENSURE $\mathbf{U}\in\mathbb{R}^{m\times k}$, $\mathbf{S}\in\mathbb{R}^{k\times k}$, $\mathbf{V}\in\mathbb{R}^{n\times k}$
      \STATE $\mathbf{\Omega} = \mathrm{randn}(n, k+s)$
      \STATE $\mathbf{Q} = \mathrm{orth}(\mathbf{A}\mathbf{\Omega})$
      \FOR {$i=1, 2, \cdots, p$}
        \STATE $\mathbf{G} = \mathrm{orth}(\mathbf{A}^{\mathrm{T}}\mathbf{Q})$
        \STATE $\mathbf{Q} = \mathrm{orth}(\mathbf{A}\mathbf{G})$
      \ENDFOR
      \STATE $\mathbf{B}  = \mathbf{Q}^{\mathrm{T}}\mathbf{A}$
      \STATE $[\mathbf{U}, \mathbf{S}, \mathbf{V}] = \mathrm{svd}(\mathbf{B})$
      \STATE $\mathbf{U} = \mathbf{Q}\mathbf{U}$
      \STATE $\mathbf{U} = \mathbf{U}(:, 1:k), \mathbf{S} = \mathbf{S}(1:k, 1:k), \mathbf{V} = \mathbf{V}(:, 1:k)$
    \end{algorithmic}
  \end{algorithm}

In Alg. 1, $\mathbf{\Omega}$ is a Gaussian i.i.d matrix. Although other kinds of random matrix can replace $\mathbf{\Omega}$ to reduce the computational cost of $\mathbf{A\Omega}$, they bring some sacrifice on accuracy. The $s$ in Alg. 1 is an oversampling parameter which enables $\mathbf{\Omega}$ with more than $k$ columns for better accuracy. $s$ is a small integer such as 5 or 10.
With $\mathbf{Q}$, we have $\mathbf{A}\approx \mathbf{QB}=\mathbf{QQ}^{\mathrm{T}}\mathbf{A}$. By performing the economic SVD on the $(k+s)\times n$ matrix $\mathbf{B}$ the approximate truncated SVD of $\mathbf{A}$ is obtained. To improve the accuracy of the approximation, power iteration (PI) scheme can be applied \citep{Halko2011Finding}, i.e., Steps 3$\sim$6. It is based on that matrix $(\mathbf{AA}^{\mathrm{T}})^p\mathbf{A}$ has exactly the same singular vectors as $\mathbf{A}$, but its singular values decay more quickly. Therefore, performing the randomized QB procedure on $(\mathbf{AA}^{\mathrm{T}})^p\mathbf{A}$ can achieve better accuracy. The orthonormalization operation ``orth($\cdot$)'' is used to alleviate the round-off error in the floating-point computation. It can be implemented with a call to a packaged QR factorization (e.g., \texttt{qr(X, 0)} in Matlab).

The basic rPCA algorithm with the PI scheme has the following guarantee \citep{Halko2011Finding,musco2015}:
\begin{equation}
||\mathbf{A}-\mathbf{QQ}^\mathrm{T}\mathbf{A}||=||\mathbf{A}-\mathbf{USV}^\mathrm{T}||\le (1+\varepsilon)||\mathbf{A}-\mathbf{A}_k||,
\end{equation}
with a high probability. Here, $\mathbf{A}_k$ is the best rank-$k$ approximation of $\mathbf{A}$. 

Assuming that multiplying an $m\times n$ sparse matrix $\mathbf{A}$  and a $n\times l$ dense matrix  costs $C_{mul}nnz(\mathbf{A})l$  \emph{flops},  performing QR factorization  of  an $m\times n$ matrix costs $C_{qr}mn \min(m, n)$ \emph{flops} and performing the economic SVD on an $m\times n$ matrix costs $C_{svd}mn\min(m,n)$ \emph{flops}, we can analyze the \emph{flop} count of  Alg. 1. Using $\mathrm{FC}_1$ to denote it, we have:
\begin{equation}
\begin{aligned}
\mathrm{FC}_{1}=pC_{qr}nl^{2}+(p+1)C_{qr}ml^2+(2p+2)C_{mul}nnz(A)l+C_{mul}mlk+C_{svd}nl^2.
\end{aligned}
\end{equation}

\section{Methodology}
\subsection{The Ideas for Acceleration}
Because many real data can be regarded as sparse matrix, accelerating the basic rPCA algorithm for sparse matrix is the focus. In Alg. 1, the matrix multiplication in Steps 2 and 7 occupy the majority of computing time if $\mathbf{A}$ is dense. However, this is not true for sparse matrix, and therefore optimizing other steps will bring substantial acceleration.

In existing work, some ideas have been proposed to accelerate the basic rPCA algorithm. In \cite{rsvdpack}, the idea of using eigendecomposition to compute SVD in Step 8 of Alg. 1 was proposed. It was pointed out that in the power iteration, orthonormalization after each matrix multiplication is not necessary. In \cite{alg971}, the power iteration was accelerated by replacing QR factorization with LU factorization, and the Gaussian matrix was replaced with the random matrix with uniform distribution. In \cite{Erichson_2017_ICCV}, the rPCA algorithm without power iteration was discussed for dense matrix in image or video processing problem. It employs a variant of the basic rPCA algorithm, where the random matrix is multiplied to the left of $\mathbf{A}$. The algorithm was accelerated by using sparse random matrices and using eigendecomposition to obtain the orthonormal basis of the subspace.

For handling sparse $\mathbf{A}$, we just use the Gaussian matrix for $\mathbf{\Omega}$, because other matrices may cause $\mathbf{A\Omega}$ rank-deficient. The useful ideas for faster randomized PCA for sparse matrix are:
\begin{itemize}
\item Use the eigendecomposition for computing economic SVD of $\mathbf{B}$,
\item Replace the orthonormal $\mathbf{Q}$ with the left singular vector matrix $\mathbf{U}$,
\item Perform $\mathbf{LU}$ factorization in the power iteration,
\item Perform orthonormalization after every other matrix-matrix multiplication in power iteration.
\end{itemize}
Firstly, we formulate the eigendecomposition based SVD as an eigSVD algorithm (described in Alg. 2), where ``eig($\cdot$)'' computes the eigendecomposition and ``spdiags($\cdot$)'' is used for constructing a sparse diagonal matrix. In Alg. 2, the ``diag($\cdot$)'' in Step 3 is the function to transform a diagonal matrix to a vector. Step 4 is to construct a sparse diagonal matrix $\mathbf{\hat{S}}=\mathrm{diag}(\mathbf{S})^{-1}$, where ``.$/$'' is the element-wise division operator. The eigSVD algorithm's correctness is given as Lemma 1.

\begin{algorithm}
    \caption{eigSVD}
    \label{alg2}
    \begin{algorithmic}[1]
      \REQUIRE $\mathbf{A}\in\mathbb{R}^{m\times n}$ ($m\ge n$)
      \ENSURE $\mathbf{U}\in\mathbb{R}^{m\times n}$, $\mathbf{S}\in\mathbb{R}^n$, $\mathbf{V}\in\mathbb{R}^{n\times n}$
      \STATE $\mathbf{B} = \mathbf{A}^{\mathrm{T}}\mathbf{A}$
      \STATE $[\mathbf{V}, \mathbf{D}] = \mathrm{eig}(\mathbf{B})$
      \STATE $\mathbf{S} = \mathrm{sqrt}(\mathrm{diag}(\mathbf{D}))$
      \STATE $\mathbf{\hat{S}} = \mathrm{spdiags}(1./\mathbf{S},0,n,n)$
      \STATE $\mathbf{U} = \mathbf{A}\mathbf{V}\mathbf{\hat{S}}$
    \end{algorithmic}
  \end{algorithm}

\begin{lemman}
The matrix $\mathbf{U}$, $\mathbf{S}$ and $\mathbf{V}$ produced by Alg. 2 form the economic SVD of matrix $\mathbf{A}$.
\end{lemman}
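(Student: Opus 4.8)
The plan is to verify directly the three defining properties of an economic SVD: that $\mathbf{V}$ is orthogonal, that $\mathbf{U}$ has orthonormal columns paired with a nonnegative diagonal $\mathbf{S}$, and that $\mathbf{U}\,\mathrm{diag}(\mathbf{S})\,\mathbf{V}^{\mathrm{T}}$ reproduces $\mathbf{A}$. First I would note that $\mathbf{B}=\mathbf{A}^{\mathrm{T}}\mathbf{A}$ is symmetric positive semidefinite, so Step 2 returns an orthogonal $\mathbf{V}$ and a diagonal $\mathbf{D}\succeq 0$ with $\mathbf{B}=\mathbf{V}\mathbf{D}\mathbf{V}^{\mathrm{T}}$; under the (generic, and in the randomized-PCA usage almost surely valid) assumption that $\mathbf{A}$ has full column rank, $\mathbf{D}$ is positive definite, so Steps 3--4 yield $\mathrm{diag}(\mathbf{S})=\mathbf{D}^{1/2}$ and $\hat{\mathbf{S}}=\mathrm{diag}(\mathbf{S})^{-1}$ with no division by zero.

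Next I would compute $\mathbf{U}^{\mathrm{T}}\mathbf{U}$ from the definition $\mathbf{U}=\mathbf{A}\mathbf{V}\hat{\mathbf{S}}$ in Step 5: expanding gives $\mathbf{U}^{\mathrm{T}}\mathbf{U}=\hat{\mathbf{S}}\mathbf{V}^{\mathrm{T}}(\mathbf{A}^{\mathrm{T}}\mathbf{A})\mathbf{V}\hat{\mathbf{S}}=\hat{\mathbf{S}}\mathbf{V}^{\mathrm{T}}\mathbf{V}\mathbf{D}\mathbf{V}^{\mathrm{T}}\mathbf{V}\hat{\mathbf{S}}=\hat{\mathbf{S}}\mathbf{D}\hat{\mathbf{S}}=\mathbf{I}_n$, so $\mathbf{U}$ has orthonormal columns. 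For the reconstruction, I would multiply out $\mathbf{U}\,\mathrm{diag}(\mathbf{S})\,\mathbf{V}^{\mathrm{T}}=\mathbf{A}\mathbf{V}\hat{\mathbf{S}}\,\mathrm{diag}(\mathbf{S})\,\mathbf{V}^{\mathrm{T}}$, and since $\hat{\mathbf{S}}\,\mathrm{diag}(\mathbf{S})=\mathbf{I}_n$ and $\mathbf{V}\mathbf{V}^{\mathrm{T}}=\mathbf{I}_n$, this collapses to $\mathbf{A}$. Combined with the dimensions ($\mathbf{U}\in\mathbb{R}^{m\times n}$, $\mathrm{diag}(\mathbf{S})\in\mathbb{R}^{n\times n}$, $\mathbf{V}\in\mathbb{R}^{n\times n}$), this is precisely the economic SVD.

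There is no substantial obstacle — the argument is elementary linear algebra — so the point deserving care is the hypothesis under which $\hat{\mathbf{S}}$ is well defined: if $\mathbf{A}$ is rank-deficient, some entry of $\mathbf{S}$ vanishes and Step 4 fails. I would therefore state explicitly that the lemma presumes $\mathrm{rank}(\mathbf{A})=n$, and remark that whenever eigSVD is invoked inside the proposed algorithms its argument is a product involving a Gaussian-derived factor and hence has full column rank with probability one. A secondary, cosmetic point is that $\mathrm{eig}$ need not order the eigenvalues, so strictly one obtains an SVD up to a common permutation of the columns of $\mathbf{U}$, $\mathbf{V}$ and the diagonal of $\mathbf{S}$; this is either fixed by a sort or simply noted as immaterial for the subsequent use.
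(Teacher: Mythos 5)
Your proof is correct, but it takes a slightly different route from the paper's. The paper argues ``top down'': it starts from the (economic) SVD $\mathbf{A}=\mathbf{U}(:,1:n)\hat{\mathbf{\Sigma}}\mathbf{V}^{\mathrm{T}}$, observes that $\mathbf{A}^{\mathrm{T}}\mathbf{A}=\mathbf{V}\hat{\mathbf{\Sigma}}^{2}\mathbf{V}^{\mathrm{T}}$ is the eigendecomposition computed in Step 2, and then identifies the algorithm's outputs $\mathbf{D}=\hat{\mathbf{\Sigma}}^{2}$, $\hat{\mathbf{S}}=\hat{\mathbf{\Sigma}}^{-1}$ and $\mathbf{U}=\mathbf{A}\mathbf{V}\hat{\mathbf{\Sigma}}^{-1}=\mathbf{U}(:,1:n)$ with the true SVD factors. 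You instead verify ``bottom up'' that the computed triple satisfies the defining properties of an economic SVD: $\mathbf{V}$ orthogonal and $\mathbf{D}\succeq 0$ from symmetry/positive semidefiniteness of $\mathbf{B}$, then $\mathbf{U}^{\mathrm{T}}\mathbf{U}=\hat{\mathbf{S}}\mathbf{D}\hat{\mathbf{S}}=\mathbf{I}$ and $\mathbf{U}\,\mathrm{diag}(\mathbf{S})\,\mathbf{V}^{\mathrm{T}}=\mathbf{A}\mathbf{V}\mathbf{V}^{\mathrm{T}}=\mathbf{A}$. Your version buys a little robustness: it does not need to claim that the eigenvector matrix returned by ``eig'' coincides with the right singular vector matrix of $\mathbf{A}$ (a statement that, with repeated or close singular values, is only true up to choice of basis in each eigenspace, a subtlety the paper glosses over), and it makes explicit the hypothesis $\mathrm{rank}(\mathbf{A})=n$ needed for Step 4, which the paper only acknowledges informally after the lemma (``numerical issues can arise if $\mathbf{A}$ does not have full column rank''). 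The paper's version, in exchange, directly exhibits the outputs as the leading SVD factors, which is the form used later. Your remarks on eigenvalue ordering are consistent with the paper, which relies on the ascending order of ``eig''/``eigs'' outputs when reversing indices in Algorithms 5 and 6.
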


\begin{proof}
Suppose $\mathbf{A}$ has SVD as (1). Since $m\ge n$,
\begin{equation}
\mathbf{A} = \mathbf{U}(:, 1:n)\hat{\mathbf{\Sigma}}\mathbf{V}^{\mathrm{T}}
\end{equation}
where $\hat{\mathbf{\Sigma}}$, a square diagonal matrix, is the first $n$ rows of $\mathbf{\Sigma}$. Eq. (5) is the economic SVD of $\mathbf{A}$. Then Step 1 computes
\begin{equation}
\mathbf{B} = \mathbf{A}^{\mathrm{T}}\mathbf{A}=\mathbf{V}\hat{\mathbf{\Sigma}}^2\mathbf{V}^{\mathrm{T}}.
\end{equation}
The right-hand side of (6) is the eigendecomposition of $\mathbf{B}$. This means in Step 2, $\mathbf{D}=\hat{\mathbf{\Sigma}}^2$ and $\mathbf{V}$ is the right singular vector matrix of $\mathbf{A}$. Therefore, the values of $\mathbf{S}$ in Step 3 are the diagonal elements of $\hat{\mathbf{\Sigma}}$ and the $\mathbf{\hat{S}}$ in Step 4 equals to $\hat{\mathbf{\Sigma}}^{-1}$. In Step 5, $\mathbf{U} = \mathbf{A}\mathbf{V}\mathbf{\hat{S}} = \mathbf{A}\mathbf{V}\hat{\mathbf{\Sigma}}^{-1} = \mathbf{U}(:, 1:n)$. The last equality is derived from (5). This proves the lemma.
\end{proof}

According to Alg. 2, the \emph{flop} count of the eigSVD algorithm is:
\begin{equation}
\begin{aligned}
\mathrm{FC}_{2}=2C_{mul}mn^2+C_{eig}n^3.\\
\end{aligned}
\end{equation}
We assume performing the eigendecomposition  of  an $n\times n$ matrix costs $C_{eig}n^3$ \emph{flops}.
Notice that eigSVD algorithm is especially efficient if $m\gg n$, because $\mathbf{B}$ becomes a small $n\times n$ matrix. And, the computed singular values in $\mathbf{S}$ are in ascending order. Numerical issues can arise if matrix $\mathbf{A}$ dose not have full column rank. So, eigSVD algorithm is only applicable to special situations.

Notice that ``eig($\cdot$)'' in Step 2 of Alg. 2 can be replaced with ``eigs($\cdot$)'' to compute the largest $k$ eigenvalues/eigenvectors, so that the algorithm can produce the results of truncated SVD. This results in an eigSVDs algorithm (see Algorithm 3), which can also be used to compute PCA.
\begin{algorithm}
    \caption{eigSVDs}
    \label{alg3}
    \begin{algorithmic}[1]
      \REQUIRE $\mathbf{A}\in\mathbb{R}^{m\times n}$ ($m\ge n$), $k$
      \ENSURE $\mathbf{U}\in\mathbb{R}^{m\times k}$, $\mathbf{S}\in\mathbb{R}^k$, $\mathbf{V}\in\mathbb{R}^{k\times n}$
      \STATE $\mathbf{B} = \mathbf{A}^{\mathrm{T}}\mathbf{A}$
      \STATE $[\mathbf{V}, \mathbf{D}] = \mathrm{eigs}(\mathbf{B}, k)$
      \STATE $\mathbf{S} = \mathrm{sqrt}(\mathrm{diag}(\mathbf{D}))$
      \STATE $\mathbf{\hat{S}} = \mathrm{spdiags}(1./\mathbf{S},0,k,k)$
      \STATE $\mathbf{U} = \mathbf{A}\mathbf{V}\mathbf{\hat{S}}$
    \end{algorithmic}
  \end{algorithm}

Secondly, the idea that the orthonormal $\mathbf{Q}$ can be replaced with the left singular matrix $\mathbf{U}$ can be explained with Lemma 2.

\begin{lemman}
In the basic rPCA algorithm, orthonormal matrix $\mathbf{Q}$ includes a set of orthonormal basis of subspace $range(\mathbf{A\Omega})$ or $range((\mathbf{AA}^{\mathrm{T}})^p\mathbf{A\Omega})$. As long as $\mathbf{Q}$ holds this property, no matter how it is produced, the result of basic rPCA algorithm will not change. 
\end{lemman}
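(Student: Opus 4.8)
The plan is to split the statement into its two assertions: (i) the matrix $\mathbf{Q}$ built by Alg. 1 really does span $\mathrm{range}(\mathbf{A\Omega})$ when $p=0$ and $\mathrm{range}((\mathbf{AA}^{\mathrm T})^{p}\mathbf{A\Omega})$ when $p\ge 1$; and (ii) the output $(\mathbf{U},\mathbf{S},\mathbf{V})$ depends on $\mathbf{Q}$ only through $\mathrm{range}(\mathbf{Q})$, so any $\mathbf{Q}$ with the correct range — however obtained — gives the same result.

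For (i) I would induct on the loop index using two elementary identities, $\mathrm{range}(\mathrm{orth}(\mathbf{X}))=\mathrm{range}(\mathbf{X})$ and $\mathrm{range}(\mathbf{M}\mathbf{X})=\mathbf{M}\cdot\mathrm{range}(\mathbf{X})$. After Step 2, $\mathrm{range}(\mathbf{Q})=\mathrm{range}(\mathbf{A\Omega})$. If before the $i$-th iteration $\mathrm{range}(\mathbf{Q})=(\mathbf{AA}^{\mathrm T})^{i-1}\mathrm{range}(\mathbf{A\Omega})$, then Step 4 gives $\mathrm{range}(\mathbf{G})=\mathbf{A}^{\mathrm T}\mathrm{range}(\mathbf{Q})$ and Step 5 gives $\mathrm{range}(\mathbf{Q})=\mathbf{A}\,\mathrm{range}(\mathbf{G})=(\mathbf{AA}^{\mathrm T})^{i}\mathrm{range}(\mathbf{A\Omega})$; after $p$ passes this is $\mathrm{range}((\mathbf{AA}^{\mathrm T})^{p}\mathbf{A\Omega})$. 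In particular the orthonormalizations inside the loop do not affect the subspace, which is precisely the fact exploited later in the paper.

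For (ii), take any two matrices $\mathbf{Q}_1,\mathbf{Q}_2$ with orthonormal columns spanning the same subspace $\mathcal{S}$. Setting $\mathbf{W}=\mathbf{Q}_1^{\mathrm T}\mathbf{Q}_2$, one checks that $\mathbf{W}$ is orthogonal and $\mathbf{Q}_2=\mathbf{Q}_1\mathbf{W}$ (both follow from $\mathbf{Q}_1\mathbf{Q}_1^{\mathrm T}$ and $\mathbf{Q}_2\mathbf{Q}_2^{\mathrm T}$ being the orthogonal projector onto $\mathcal{S}$). Hence $\mathbf{B}_2=\mathbf{Q}_2^{\mathrm T}\mathbf{A}=\mathbf{W}^{\mathrm T}\mathbf{B}_1$, so if $\mathbf{B}_1=\hat{\mathbf{U}}\mathbf{S}\mathbf{V}^{\mathrm T}$ is the SVD taken in Step 8, then $\mathbf{B}_2=(\mathbf{W}^{\mathrm T}\hat{\mathbf{U}})\mathbf{S}\mathbf{V}^{\mathrm T}$ is a valid SVD of $\mathbf{B}_2$ with the same $\mathbf{S}$ and $\mathbf{V}$; Step 9 then yields $\mathbf{Q}_2(\mathbf{W}^{\mathrm T}\hat{\mathbf{U}})=\mathbf{Q}_1\mathbf{W}\mathbf{W}^{\mathrm T}\hat{\mathbf{U}}=\mathbf{Q}_1\hat{\mathbf{U}}$, i.e. the same $\mathbf{U}$, and Step 10 truncates identically. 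Equivalently, the whole output assembles into the rank-$k$ truncation of $\mathbf{Q}\mathbf{Q}^{\mathrm T}\mathbf{A}$, and $\mathbf{Q}\mathbf{Q}^{\mathrm T}$ is the orthogonal projector onto $\mathcal{S}$, hence independent of the basis chosen for $\mathcal{S}$.

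The one real obstacle is the non-uniqueness of the SVD: if $\mathbf{B}$ has repeated singular values, or simply because of sign choices for singular vectors, the routine in Step 8 need not return exactly $\mathbf{W}^{\mathrm T}\hat{\mathbf{U}}$ on input $\mathbf{B}_2$, so the equality of outputs must be read modulo the standard SVD ambiguity; what is strictly invariant is $\mathbf{S}$ and the approximant $\mathbf{U}\mathbf{S}\mathbf{V}^{\mathrm T}$ (and, when $\sigma_k>\sigma_{k+1}$, its rank-$k$ truncation as well). A secondary point is to phrase the range identities in step (i) so that they hold even when $\mathbf{A\Omega}$ or an iterate is rank-deficient: interpreting $\mathrm{orth}(\cdot)$ as returning a basis of the column space makes them hold verbatim, with rank deficiency affecting only the number of columns of the iterates, not the argument.
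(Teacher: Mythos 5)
Your proposal is correct and follows essentially the same route as the paper: the key fact in both is that $\mathbf{Q}\mathbf{Q}^{\mathrm{T}}$ is the orthogonal projector onto $\mathrm{range}(\mathbf{Q})$, which is uniquely determined by the subspace, so the output $\mathbf{U}\mathbf{S}\mathbf{V}^{\mathrm{T}}=\mathbf{Q}\mathbf{Q}^{\mathrm{T}}\mathbf{A}$ is basis-independent. Your version simply fills in details the paper leaves implicit --- the induction establishing the range of $\mathbf{Q}$ through the power iteration, the explicit orthogonal change-of-basis $\mathbf{W}=\mathbf{Q}_1^{\mathrm{T}}\mathbf{Q}_2$, and the caveat that equality of the individual factors holds only modulo the usual SVD non-uniqueness --- which is a sound refinement rather than a different argument.
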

\begin{proof}
From Step 2 of Alg. 1 we see that $\mathbf{Q}$ is an orthonormal matrix, and its columns are a set of orthonormal basis of subspace $range(\mathbf{A\Omega})$. If $p>0$, from Steps 3$\sim$6 we can see that the orthonormal matrix $\mathbf{Q}$ includes the a set of orthonormal basis of subspace $range((\mathbf{AA}^{\mathrm{T}})^p\mathbf{A\Omega})$. The result of the basic rPCA algorithm, is actually $\mathbf{QB} = \mathbf{QQ}^{\mathrm{T}}\mathbf{A}$, which further equals $\mathbf{US V}^{\mathrm{T}}$. Notice that $\mathbf{QQ}^{\mathrm{T}}$ is an orthogonal projector onto the subspace $range(\mathbf{Q})$, if $\mathbf{Q}$ is an orthonormal matrix. The orthogonal projector is uniquely determined by the subspace (see \citep{matrix1996} or Section 8.2 of \citep{Halko2011Finding}), i.e. $range(\mathbf{A\Omega})$ or $range((\mathbf{AA}^{\mathrm{T}})^p\mathbf{A\Omega})$. Therefore, As long as $\mathbf{Q}$ includes a set of orthonormal basis of the subspace, $ \mathbf{QQ}^{\mathrm{T}}$ is identical and the basic rPCA algorithm's results will not change.
\end{proof}

Both QR factorization and SVD of a same matrix produce the orthonormal basis of its range space (column space), in $\mathbf{Q}$ and $\mathbf{U}$, respectively. Therefore, with Lemma 2, we see that $\mathbf{Q}$ can be replaced by $\mathbf{U}$ from SVD in the basic rPCA algorithm.

Thirdly, LU factorization is used in power iteration to replace QR factorization for saving runtime. This does not affecting the algorithm's correctness, which is proved in Lemma 3.
\begin{lemman}
In the basic rPCA algorithm, the ``orth($\cdot$)'' operation in the power iteration, except the last one, can be replaced by LU factorization. This does not affect the algorithm's accuracy in exact arithmetic.
\end{lemman}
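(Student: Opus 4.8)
The plan is to reduce everything to Lemma 2, which already tells us that the output of the basic rPCA algorithm is determined solely by the subspace $range(\mathbf{A\Omega})$ when $p=0$, or $range((\mathbf{AA}^{\mathrm{T}})^p\mathbf{A\Omega})$ when $p>0$, \emph{provided} the matrix $\mathbf{Q}$ handed to Step 7 is an orthonormal basis of that subspace. Hence it suffices to show two things: (i) replacing each ``orth($\cdot$)'' in Steps 3--6 \emph{except} the final one (the $\mathbf{Q}=\mathrm{orth}(\mathbf{A}\mathbf{G})$ at $i=p$) by an LU factorization leaves the column space of the matrix carried from one half-step to the next unchanged; and (ii) the single remaining ``orth($\cdot$)'' then restores orthonormality without altering that column space. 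Point (ii) is immediate, so the content is in (i), and the $p=0$ case is vacuous since the loop is empty.

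For (i) the key algebraic facts are elementary. First, $range(\mathbf{M}\mathbf{R})=range(\mathbf{M})$ whenever $\mathbf{R}$ is invertible, and consequently, if $range(\mathbf{M}_1)=range(\mathbf{M}_2)$ with $\mathbf{M}_1,\mathbf{M}_2$ of equal size and full column rank, then $range(\mathbf{A}\mathbf{M}_1)=range(\mathbf{A}\mathbf{M}_2)$ (write $\mathbf{M}_2=\mathbf{M}_1\mathbf{R}$ with $\mathbf{R}$ invertible). Second, the two-output Matlab LU factorization $[\mathbf{L},\mathbf{U}]=\mathrm{lu}(\mathbf{X})$ of a full-column-rank $\mathbf{X}$ returns a ``psychologically lower triangular'' $\mathbf{L}$ (a permutation times a unit lower triangular matrix) and a square, upper triangular, \emph{invertible} $\mathbf{U}$ with $\mathbf{X}=\mathbf{L}\mathbf{U}$; therefore $range(\mathbf{L})=range(\mathbf{X})$, exactly as $range(\mathrm{orth}(\mathbf{X}))=range(\mathbf{X})$. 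I would then run an induction over the loop index $i$: assuming the matrix carried into iteration $i$ spans $range((\mathbf{AA}^{\mathrm{T}})^{i-1}\mathbf{A\Omega})$, Step 4 (via LU or orth, indifferently, using the second fact) yields a matrix spanning $\mathbf{A}^{\mathrm{T}}\cdot range((\mathbf{AA}^{\mathrm{T}})^{i-1}\mathbf{A\Omega})=range(\mathbf{A}^{\mathrm{T}}(\mathbf{AA}^{\mathrm{T}})^{i-1}\mathbf{A\Omega})$, and Step 5 then yields one spanning $range((\mathbf{AA}^{\mathrm{T}})^{i}\mathbf{A\Omega})$, using the first fact. At $i=p$ the last Step 5 retains ``orth($\cdot$)'', so the resulting $\mathbf{Q}$ is orthonormal with $range(\mathbf{Q})=range((\mathbf{AA}^{\mathrm{T}})^{p}\mathbf{A\Omega})$, and Lemma 2 finishes the argument.

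The main obstacle is the hypothesis under which the LU factorization is usable at all: every matrix to which it is applied, i.e. $\mathbf{A}^{\mathrm{T}}\mathbf{Q}$ and $\mathbf{A}\mathbf{G}$ at each iteration, must have full column rank $k+s$ so that its $\mathbf{U}$ factor is invertible and the column-space identity holds. This is precisely where ``in exact arithmetic'' enters: when $\mathrm{rank}(\mathbf{A})\ge k+s$ and $\mathbf{\Omega}$ is Gaussian, $range(\mathbf{A\Omega})$ is a generic $(k+s)$-dimensional subspace of $range(\mathbf{A})$, and one checks inductively that multiplying alternately by $\mathbf{A}^{\mathrm{T}}$ and $\mathbf{A}$ never drops the dimension, since the kernels $range(\mathbf{A})^{\perp}$ and $range(\mathbf{A}^{\mathrm{T}})^{\perp}$ intersect these generic subspaces trivially, so all intermediate factors have rank $k+s$ almost surely. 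I would state this rank condition explicitly (or simply assume $\mathbf{A}$ has rank at least $k+s$) before invoking the LU identity; the permutation from partial pivoting is harmless because it is itself invertible and, in the two-output form, absorbed into $\mathbf{L}$.
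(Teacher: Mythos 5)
Your proof is correct and follows essentially the same route as the paper: the pivoted LU factor $\mathbf{P}^{\mathrm{T}}\mathbf{L}$ spans the same column space as the matrix being factored, so each half-step still carries a basis of $range((\mathbf{AA}^{\mathrm{T}})^{p}\mathbf{A\Omega})$, and Lemma 2 then gives that the output is unchanged. The only difference is that you spell out the induction over the loop and the full-column-rank (almost-sure, for Gaussian $\mathbf{\Omega}$ with $\mathrm{rank}(\mathbf{A})\ge k+s$) hypothesis needed for the upper-triangular factor to be invertible, a condition the paper's proof leaves implicit.
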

\begin{proof}
Firstly, if the ``orth($\cdot$)'' is not performed, the power iteration produces $\mathbf{Q}$ including a set of basis of the subspace $range((\mathbf{AA}^{\mathrm{T}})^p\mathbf{A\Omega})$. As mentioned before, the ``orth($\cdot$)'' is just for alleviating the round-off error, and after using it $\mathbf{Q}$ still represents $range((\mathbf{AA}^{\mathrm{T}})^p\mathbf{A\Omega})$.

The pivot LU factorization of a matrix $\mathbf{M}$ is:
\begin{equation}
\mathbf{PM} = \mathbf{LU},
\end{equation}
where $\mathbf{P}$ is a permutation matrix, and $\mathbf{L}$ and $\mathbf{U}$ are lower triangular and upper triangular matrices, respectively. Obviously, $\mathbf{M} = (\mathbf{P}^{\mathrm{T}}\mathbf{L})\mathbf{U}$, where $\mathbf{P}^{\mathrm{T}}\mathbf{L}$ has the same column space as $\mathbf{M}$. Therefore, replacing ``orth($\cdot$)'' with LU factorization (using $\mathbf{P}^{\mathrm{T}}\mathbf{L}$) also produces the basis of $range((\mathbf{AA}^{\mathrm{T}})^p\mathbf{A\Omega})$. Then, based on Lemma 2, this does not affect the algorithm's result in exact arithmetic.
\end{proof}

Notice that the LU factor $\mathbf{P}^{\mathrm{T}}\mathbf{L}$ has scaled matrix entries with linearly independent columns, since $\mathbf{L}$ is a lower triangular matrix with unit diagonals and $\mathbf{P}$ just means row permutation. Therefore, it also alleviates the round-off error. 

Finally, the orthonormalization or LU factorization in power iteration can be performed after every other matrix multiplication. It harms the accuracy little but remarkably reduces runtime.

\subsection{A Modified Power Iteration Scheme and Handling Matrix with More Columns}
For a sparse matrix $\mathbf{A}$, the power iteration in the basic rPCA algorithm (Alg. 1) is computationally expensive, because it includes the multiplication of two dense matrices. We also notice that, each time we increase the power parameter by one, two matrix multiplications are induced resulting in large increase of computation cost. This makes inconvenient trade-off between runtime and accuracy. To alleviate this issue, we here propose a modified power iteration scheme, which allows odd number of passes over $\mathbf{A}$ and thus provides more convenient performance trade-off of the rPCA algorithm.

We first observe that,  if the power parameter $p>0$, Steps 1 and 2 of Alg. 1 can be simply replaced with:
\begin{algorithm}
    \begin{algorithmic}[1]
      \STATE $\mathbf{Q} = \mathbf{\Omega} = \mathrm{randn}(m, k+s)$
    \end{algorithmic}
  \end{algorithm}
\\For the same power parameter $p$, this reduces one pass over matrix $\mathbf{A}$. Because the singular values of $(\mathbf{AA}^{\mathrm{T}})^p$ decay more quickly than $(\mathbf{AA}^{\mathrm{T}})^{p-1}\mathbf{A}$, performing randomized QB procedure on $(\mathbf{AA}^{\mathrm{T}})^p$ is more accurate than $(\mathbf{AA}^{\mathrm{T}})^{p-1}\mathbf{A}$. It means:
\begin{equation}
||\mathbf{A}-\mathbf{\hat{Q}\hat{Q}}^\mathrm{T}\mathbf{A}||<||\mathbf{A}-\mathbf{QQ}^\mathrm{T}\mathbf{A}||,
\end{equation}
where $\mathbf{\hat{Q}}$ is the orthogonal matrix produced by $(\mathbf{AA}^{\mathrm{T}})^p\mathbf{\Omega}$ while $\mathbf{Q}$ is produced from $(\mathbf{AA}^{\mathrm{T}})^{p-1}\mathbf{A\Omega}$. This proves the rationality of this modification. Therefore, we can just modify Step 1 and 2 in Alg. 1 like this without other modification, to realize the odd number of passes over $\mathbf{A}$.

Another modification of Alg. 1 can be motivated by the observation that the \emph{flop} count of Alg. 1, i.e. (4),  is not favorable to the case with $m<n$, because $C_{svd}$ is much large than other constants ($C_{qr}$ and $C_{mul}$). Although we may run the algorithm to process $\mathbf{A}^{\mathrm{T}}$, the transpose of a sparse matrix is not easily obtained due to the storage format of sparse matrix.

Actually, there is a variant of the basic rPCA algorithm \citep{Erichson_2017_ICCV,alg971}, where the random matrix is multiplied to the left of $\mathbf{A}$. With the same idea, we derive an algorithm called basic rPCAt described as Alg. 4. Its flop count is:
\begin{equation}
\begin{aligned}
\mathrm{FC}_{4}=pC_{qr}ml^{2}+(p+1)C_{qr}nl^2+(2p+2)C_{mul}nnz(A)l+C_{mul}nlk+C_{svd}ml^2.
\end{aligned}
\end{equation}
Therefore, we can derive that when Alg. 1 and Alg. 4 handle a sparse matrix $\mathbf{A}\in\mathbb{R}^{m\times n}$ with $m<n$,
\begin{equation}
\mathrm{FC}_{1} - \mathrm{FC}_{4}=
(C_{svd}-C_{qr}-C_{mul})(n-m)l^2>0.
\end{equation}
The reason is that $C_{svd}$ is much larger than $C_{qr}$ and $C_{mul}$.
Eq. (11) shows Alg. 4 is more efficient than Alg. 1 when handling the matrix with more columns. Thus, we shall choose between them according to the matrix's dimensions, so as to achieve the best runtime performance.

\begin{algorithm}
    \caption{basic rPCAt}
    \label{alg4}
    \begin{algorithmic}[1]
      \REQUIRE $\mathbf{A}\in\mathbb{R}^{m\times n}$, rank parameter $k$, power parameter $p$
      \ENSURE $\mathbf{U}\in\mathbb{R}^{m\times k}$, $\mathbf{S}\in\mathbb{R}^{k\times k}$, $\mathbf{V}\in\mathbb{R}^{n\times k}$
      \STATE $\mathbf{\Omega} = \mathrm{randn}(k+s, m)$
      \STATE $\mathbf{Q} = \mathrm{orth}((\mathbf{\Omega}\mathbf{A})^{\mathrm{T}})$
      \FOR {$i=1, 2, \cdots, p$}
        \STATE $\mathbf{G} = \mathrm{orth}(\mathbf{A}\mathbf{Q})$
        \STATE $\mathbf{Q} = \mathrm{orth}(\mathbf{A}^{\mathrm{T}}\mathbf{G})$
      \ENDFOR
      \STATE $\mathbf{B}  =(\mathbf{A}\mathbf{Q})^{\mathrm{T}}$
      \STATE $[\mathbf{\hat{U}}, \mathbf{\hat{S}}, \mathbf{\hat{V}}] = \mathrm{svd}(\mathbf{B})$
      \STATE $\mathbf{V} = \mathbf{Q}\mathbf{\hat{U}}$
      \STATE $\mathbf{U} = \mathbf{\hat{V}}(:, 1:k), \mathbf{S} = \mathbf{\hat{S}}(1:k, 1:k), \mathbf{V} = \mathbf{V}(:, 1:k)$
    \end{algorithmic}
  \end{algorithm}

\subsection{The Fast PCA Algorithm for Sparse Data}
Based on Section 3.1 , we find out that the eigSVD procedure can be applied to the basic rPCA algorithm to produce both the economic SVD of $\mathbf{B}$ and the orthonormal $\mathbf{Q}$. Because $k+s\ll \min(m, n)$ and matrices are not rank-deficient in practice, using eigSVD induces no numerical issue. With the accelerating skills, we propose a fast rPCA algorithm (\emph{frPCA}) for sparse matrix (described as Alg. 5), where ``lu($\cdot$)'' denotes the LU factorization and its first output is ``$\mathbf{P}^{\mathrm{T}}\mathbf{L}$''. ``$\lfloor\cdot\rfloor$'' denotes the floor function which returns the maximum integer no larger than the input value, and the pass parameter $q$ represents the number of passes over $\mathbf{A}$ in the whole algorithm. The equivalence between the frPCA algorithm and the basic rPCA algorithm is demonstrated as Theorem 1.

\begin{algorithm}
    \caption{frPCA}
    \label{alg5}
    \begin{algorithmic}[1]
      \REQUIRE $\mathbf{A}\in\mathbb{R}^{m\times n}$~$(m \le n)$, $k$, pass parameter $q\ge 2$
      \ENSURE $\mathbf{U}\in\mathbb{R}^{m\times k}$, $\mathbf{S}\in\mathbb{R}^{k}$, $\mathbf{V}\in\mathbb{R}^{n\times k}$
     \IF{$q$ is an even number}
	\STATE $\mathbf{Q} = \mathrm{randn}(n, k+s)$
	\STATE $\mathbf{Q}=\mathbf{A}\mathbf{Q}$
	\STATE \textbf{if} $q>2$ \textbf{then} $[\mathbf{Q},\sim]=\mathrm{lu}(\mathbf{Q})$ \textbf{else} $[\mathbf{Q},\sim,\sim]=\mathrm{eigSVD}(\mathbf{Q})$
	\ELSE
	\STATE $\mathbf{Q} = \mathrm{randn}(m, k+s)$
	\ENDIF
      \FOR {$i=1, 2, 3, \cdots, \lfloor\frac{q-1}{2}\rfloor$}
        \IF {$i==\lfloor\frac{q-1}{2}\rfloor$}
		\STATE $[\mathbf{Q},\sim,\sim] = \mathrm{eigSVD}(\mathbf{A}(\mathbf{A}^\mathrm{T}\mathbf{Q}))$
	\ELSE
        	\STATE $[\mathbf{Q},\sim] = \mathrm{lu}(\mathbf{A}(\mathbf{A}^\mathrm{T}\mathbf{Q}))$
      \ENDIF
	\ENDFOR
	\STATE $[\mathbf{\hat{U}}, \mathbf{\hat{S}}, \mathbf{\hat{V}}] = \mathrm{eigSVD}(\mathbf{A}^\mathrm{T}\mathbf{Q})$
	 \STATE $ind = k+s:-1:s+1$
	\STATE $\mathbf{U} = \mathbf{Q}\mathbf{\hat{V}}(:,ind), \mathbf{V} = \mathbf{\hat{U}}(:, ind), \mathbf{S} = \mathbf{\hat{S}}(ind)$
    \end{algorithmic}
\end{algorithm}

\begin{theoremn}
The frPCA algorithm (Alg. 5) is mathematically equivalent to the basic rPCA algorithm (Alg. 1) when $p=(q-2)/2$.
\end{theoremn}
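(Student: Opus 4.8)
The plan is to reduce the statement to Lemmas 1--3, which already do the heavy lifting. Since $p=(q-2)/2$ is a nonnegative integer exactly when $q$ is even, I would first restrict to even $q$ and write $q=2p+2$. By Lemma 2, the output of Alg. 1 is determined solely by the subspace spanned by the orthonormal $\mathbf{Q}$ that enters its Step 7, namely $range(\mathbf{A\Omega})$ if $p=0$ and $range((\mathbf{AA}^{\mathrm{T}})^p\mathbf{A\Omega})$ if $p>0$. So it suffices to prove: (i) for the same Gaussian draw $\mathbf{\Omega}$, the matrix $\mathbf{Q}$ produced by Alg. 5 immediately before its final $\mathrm{eigSVD}$ is orthonormal and spans exactly that subspace; and (ii) the last three lines of Alg. 5 then reconstruct the same rank-$k$ truncated SVD that Steps 7--10 of Alg. 1 produce.

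For (i) I would track $range(\mathbf{Q})$ through the algorithm. If $q=2$ (so $p=0$), then $\lfloor(q-1)/2\rfloor=0$ and the \textbf{for} loop is empty; the opening block sets $\mathbf{Q}=\mathrm{eigSVD}(\mathbf{A\Omega})$, which by Lemma 1 is an orthonormal basis of $range(\mathbf{A\Omega})$. If $q>2$ (so $p\ge1$), then $\lfloor(q-1)/2\rfloor=p$; the opening block first sets $\mathbf{Q}$ to the $\mathbf{P}^{\mathrm{T}}\mathbf{L}$ factor of $\mathbf{A\Omega}$, which by the argument in the proof of Lemma 3 still spans $range(\mathbf{A\Omega})$. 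The loop then executes $p$ times: for $i=1,\dots,p-1$ it replaces $\mathbf{Q}$ by the $\mathbf{P}^{\mathrm{T}}\mathbf{L}$ factor of $\mathbf{AA}^{\mathrm{T}}\mathbf{Q}$, which by Lemma 3 (together with the generic full-column-rank assumption noted in the text) keeps $\mathbf{Q}$ a basis of $range((\mathbf{AA}^{\mathrm{T}})^{i}\mathbf{A\Omega})$; and for $i=p$ it applies $\mathrm{eigSVD}$ to $\mathbf{AA}^{\mathrm{T}}\mathbf{Q}$, yielding by Lemma 1 an orthonormal basis of $range((\mathbf{AA}^{\mathrm{T}})^{p}\mathbf{A\Omega})$. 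By induction this is the same subspace used by Alg. 1, so Lemma 2 gives $\mathbf{QQ}^{\mathrm{T}}\mathbf{A}$ identical for both algorithms.

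For (ii), since $\mathbf{A}^{\mathrm{T}}\mathbf{Q}\in\mathbb{R}^{n\times(k+s)}$ with $n\ge k+s$, Lemma 1 yields the economic SVD $\mathbf{A}^{\mathrm{T}}\mathbf{Q}=\hat{\mathbf{U}}\,\mathrm{diag}(\hat{\mathbf{S}})\,\hat{\mathbf{V}}^{\mathrm{T}}$; transposing shows the matrix $\mathbf{B}=\mathbf{Q}^{\mathrm{T}}\mathbf{A}$ of Alg. 1 equals $\hat{\mathbf{V}}\,\mathrm{diag}(\hat{\mathbf{S}})\,\hat{\mathbf{U}}^{\mathrm{T}}$, an SVD of $\mathbf{B}$. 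Hence $\hat{\mathbf{V}},\,\mathrm{diag}(\hat{\mathbf{S}}),\,\hat{\mathbf{U}}$ play the roles of $\mathbf{U},\mathbf{S},\mathbf{V}$ in Step 8 of Alg. 1, up only to the inherent non-uniqueness of the SVD (signs of singular vectors, and ordering among repeated singular values). Because $\mathrm{eigSVD}$ returns singular values in ascending order, the index set $ind=k+s:-1:s+1$ picks the $k$ largest and puts them in descending order, so $\mathbf{U}=\mathbf{Q}\hat{\mathbf{V}}(:,ind)$, $\mathbf{V}=\hat{\mathbf{U}}(:,ind)$ and $\mathbf{S}=\hat{\mathbf{S}}(ind)$ match the $\mathbf{QU}(:,1:k)$, $\mathbf{V}(:,1:k)$, $\mathbf{S}(1:k,1:k)$ returned by Steps 9--10 of Alg. 1. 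Together (i) and (ii) establish the equivalence.

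I expect the main obstacle to be bookkeeping rather than any single hard idea: confirming $\lfloor(q-1)/2\rfloor=p$ and that the final loop iteration is exactly the one that switches from LU to $\mathrm{eigSVD}$, carrying the chain of $range(\cdot)$ identities through the alternating LU/eigSVD steps without losing the full-column-rank hypotheses that Lemma 1 needs, and lining up the transposed, reverse-ordered output of the closing $\mathrm{eigSVD}$ against the plain $\mathrm{svd}$ of Alg. 1 --- all the while remembering that the two algorithms orthonormalize at different frequencies (every step versus every other step), which matters for finite-precision accuracy but, by Lemma 2, not for the exact-arithmetic equivalence asserted here.
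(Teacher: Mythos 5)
Your proposal is correct and follows essentially the same route as the paper's own proof: reduce everything to Lemmas 1--3 (eigSVD correctness, dependence of the output only on $range(\mathbf{Q})$, and LU replacing orthonormalization), track the subspace through the modified power iteration, and justify the last three lines via Lemma 1 plus the ascending order of eigSVD's singular values. The paper states this only as a brief sketch, while you spell out the bookkeeping (loop count $\lfloor(q-1)/2\rfloor=p$, the transposition $\mathbf{B}=(\mathbf{A}^{\mathrm{T}}\mathbf{Q})^{\mathrm{T}}$, and the reversal index $ind$), but the underlying argument is the same.
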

\begin{proof}
When $p=(q-2)/2$, the number of power iteration is the same for the both algorithms. One difference between Alg. 5 and Alg. 1 is in the power iteration (the "for" loop). Based on Lemma 1 we see that eigSVD accurately produces a set of orthonormal basis. Besides, based on Lemma 2 and 3, we see the power iteration in Alg. 5 is mathematically equivalent to that in Alg. 1. The other difference is the last three steps in Alg. 5. Its correctness is due to Lemma 1 and that the singular values produces by eigSVD is in the ascending order.
\end{proof}

Below we analyze the flop count of Alg. 5.
Suppose the flop count of multiplication of $\mathbf{M}\in\mathbb{R}^{m\times l}$ and $\mathbf{N}\in\mathbb{R}^{l\times l}$ is $C_{mul}ml^2$. Here, $C_{mul}$ reflects one addition and one multiplication. If LU factorization is performed on $\mathbf{M}$, it takes $ml^2-l^3/2$ times minus and multiplication operations, denoted by $C_{lu}(ml^2-l^3/2)$ flop counts. If $l \ll m$, we see that LU factorization costs similar runtime as the the matrix multiplication. So, for the purpose of runtime comparison, we assume that  $C_{mul}ml^2\approx C_{lu}(ml^2-l^3/2)$ in the following analysis. Considering that $l \ll \min(m,n)$, we derive the flop count of Alg. 5 for the situation with $q$ equal an even number:

\begin{equation}
\begin{aligned}
\mathrm{FC}_{5}&= (\frac{q}{2}-1)C_{lu}(ml^2-\frac{l^3}{2})+qC_{mul}nnz(A)l+C_{mul}mlk+2C_{mul}(m+n)l^2+2C_{eig}l^3\\
&\approx(\frac{q}{2}-1)C_{mul}ml^2+qC_{mul}nnz(A)l+C_{mul}mlk+2C_{mul}(m+n)l^2.
\end{aligned}
\end{equation}


As we will see soon, Alg. 5 is more efficient for handling matrix $\mathbf{A}$ with dimension $m<n$. So, we also propose a variant fast rPCA algorithm (denoted by \emph{frPCAt}) through applying the accelerating skills to Alg. 4. The resulted algorithm is described as Alg. 6. 
\begin{algorithm}
    \caption{frPCAt}
    \label{alg6}
    \begin{algorithmic}[1]
      \REQUIRE $\mathbf{A}\in\mathbb{R}^{m\times n}$ ($m\ge n$), $k$, pass parameter $q\ge 2$
      \ENSURE $\mathbf{U}\in\mathbb{R}^{m\times k}$, $\mathbf{S}\in\mathbb{R}^{k}$, $\mathbf{V}\in\mathbb{R}^{n\times k}$
     \IF{$q$ is an even number}
	\STATE $\mathbf{Q} = \mathrm{randn}(k+s, m)$
	\STATE $\mathbf{Q} =(\mathbf{Q}\mathbf{A})^\mathrm{T}$
	\STATE \textbf{if} $q==2$ \textbf{then} $[\mathbf{Q},\sim,\sim]=\mathrm{eigSVD}(\mathbf{Q})$ \textbf{else} $[\mathbf{Q},\sim]=\mathrm{lu}(\mathbf{Q})$
	\ELSE
	\STATE $\mathbf{Q} = \mathrm{randn}(n, k+s)$
	\ENDIF
      \FOR {$i=1, 2, 3, \cdots, \lfloor\frac{q-1}{2}\rfloor$}
        \IF {$i==\lfloor\frac{q-1}{2}\rfloor$}
		\STATE $[\mathbf{Q},\sim,\sim] = \mathrm{eigSVD}(\mathbf{A}^\mathrm{T}(\mathbf{A}\mathbf{Q}))$
	\ELSE
        	\STATE $[\mathbf{Q},\sim] = \mathrm{lu}(\mathbf{A}^\mathrm{T}(\mathbf{A}\mathbf{Q}))$
      \ENDIF
	\ENDFOR
	\STATE $[\mathbf{\hat{U}}, \mathbf{\hat{S}}, \mathbf{\hat{V}}] = \mathrm{eigSVD}(\mathbf{AQ})$
	 \STATE $ind = k+s:-1:s+1$
	\STATE $\mathbf{U} = \mathbf{\hat{U}}(:, ind), \mathbf{V} = \mathbf{Q}\mathbf{\hat{V}}(:,ind), \mathbf{S} = \mathbf{\hat{S}}(ind)$
    \end{algorithmic}
\end{algorithm}

\begin{theoremn}
The variant fast rPCA algorithm (Alg. 6) is mathematically equivalent to the basic rPCAt algorithm (Alg. 4) when $p=(q-2)/2$.
\end{theoremn}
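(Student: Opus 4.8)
The plan is to transcribe the proof of Theorem~1, exchanging the roles of $\mathbf{A}$ and $\mathbf{A}^{\mathrm{T}}$: Alg.~6 relates to Alg.~4 exactly as Alg.~5 relates to Alg.~1. First I would dispense with the bookkeeping. When $p=(q-2)/2$ the pass parameter $q=2p+2$ is even, so in Alg.~6 the even branch is taken (the odd branch is never used here), and the ``for'' loop executes $\lfloor(q-1)/2\rfloor=p$ times, matching the $p$ power steps of Alg.~4. It therefore suffices to check two things: that the subspace carried by $\mathbf{Q}$ agrees between the two algorithms, and that the closing steps produce the same truncated SVD.

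For the subspace, note that in Alg.~4 Step~2 yields $range(\mathbf{Q})=range(\mathbf{A}^{\mathrm{T}}\mathbf{\Omega}^{\mathrm{T}})$ with $\mathbf{\Omega}=\mathrm{randn}(k+s,m)$, and after the $p$ power steps $range(\mathbf{Q})=range\big((\mathbf{A}^{\mathrm{T}}\mathbf{A})^{p}\mathbf{A}^{\mathrm{T}}\mathbf{\Omega}^{\mathrm{T}}\big)$. In Alg.~6 the same random $\mathbf{\Omega}$ enters via $\mathbf{Q}=(\mathbf{\Omega}\mathbf{A})^{\mathrm{T}}$; the ensuing $\mathrm{lu}(\cdot)$ preserves the column space by Lemma~3 (and $\mathrm{eigSVD}(\cdot)$, used when $q=2$, returns an orthonormal basis of it by Lemma~1), and each loop step replaces $\mathbf{Q}$ by a column-space-preserving factor of $\mathbf{A}^{\mathrm{T}}(\mathbf{A}\mathbf{Q})$, again by Lemmas~1 and~3. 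Hence the final $\mathbf{Q}$ in Alg.~6 spans the same subspace $range\big((\mathbf{A}^{\mathrm{T}}\mathbf{A})^{p}\mathbf{A}^{\mathrm{T}}\mathbf{\Omega}^{\mathrm{T}}\big)$, and since the last factorization in the loop (or the initialization when $q=2$) is an $\mathrm{eigSVD}$, this $\mathbf{Q}$ has orthonormal columns, so $\mathbf{Q}\mathbf{Q}^{\mathrm{T}}$ is the orthogonal projector onto that subspace. By the argument of Lemma~2 applied to the row-space projection $\mathbf{A}\mapsto(\mathbf{A}\mathbf{Q})\mathbf{Q}^{\mathrm{T}}$, only this subspace --- not the chosen basis --- affects the output, so both algorithms compute the same rank-$(k+s)$ approximant $\mathbf{A}\mathbf{Q}\mathbf{Q}^{\mathrm{T}}$.

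Finally I would reconcile the last three lines. Alg.~4 forms $\mathbf{B}=(\mathbf{A}\mathbf{Q})^{\mathrm{T}}$ and takes $\mathrm{svd}(\mathbf{B})=\hat{\mathbf{U}}\hat{\mathbf{S}}\hat{\mathbf{V}}^{\mathrm{T}}$, i.e. $\mathbf{A}\mathbf{Q}=\hat{\mathbf{V}}\hat{\mathbf{S}}\hat{\mathbf{U}}^{\mathrm{T}}$, and outputs $\mathbf{U}=\hat{\mathbf{V}}(:,1:k)$, $\mathbf{V}=(\mathbf{Q}\hat{\mathbf{U}})(:,1:k)$; Alg.~6 instead calls $\mathrm{eigSVD}(\mathbf{A}\mathbf{Q})$, which by Lemma~1 returns the economic SVD $\mathbf{A}\mathbf{Q}=\hat{\mathbf{U}}\,\mathrm{diag}(\hat{\mathbf{S}})\,\hat{\mathbf{V}}^{\mathrm{T}}$ --- the transpose of the previous factorization, which is precisely the relabeling $\hat{\mathbf{U}}\leftrightarrow\hat{\mathbf{V}}$ visible in the two assignment lines. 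The only residual difference is that $\mathrm{eigSVD}$ orders the singular values ascending, so the dominant $k$ components are selected by the reversed index set $ind=k+s:-1:s+1$, which simultaneously drops the $s$ oversampling directions and restores descending order, giving $\mathbf{U}=\hat{\mathbf{U}}(:,ind)$, $\mathbf{V}=\mathbf{Q}\hat{\mathbf{V}}(:,ind)$, $\mathbf{S}=\hat{\mathbf{S}}(ind)$ in agreement with Alg.~4. The one place demanding care is this transpose/relabeling of $\hat{\mathbf{U}}$ and $\hat{\mathbf{V}}$ together with the ascending-versus-descending convention; everything else is the proof of Theorem~1 read with $\mathbf{A}$ and $\mathbf{A}^{\mathrm{T}}$ interchanged.
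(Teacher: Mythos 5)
Your proposal is correct and follows essentially the same route as the paper's own proof: equate the number of power-iteration steps when $p=(q-2)/2$, use Lemmas 1--3 to show the subspace carried by $\mathbf{Q}$ (hence the projector) is unchanged by the LU/eigSVD replacements, and justify the final three lines by Lemma 1 together with the ascending order of eigSVD's singular values handled by the reversed index set. You simply spell out the bookkeeping and the $\hat{\mathbf{U}}\leftrightarrow\hat{\mathbf{V}}$ relabeling more explicitly than the paper does.
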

\begin{proof}
When $p=(q-2)/2$, the number of power iteration is the same. The differences between Alg. 6 and Alg. 4 are also in the power iteration and the last three steps in Alg. 6. Based on Lemma 1 we see that eigSVD accurately produces a set of orthonormal basis. Besides, based on Lemma 2 and 3, we see the power iteration in Alg. 6 is mathematically equivalent to that in Alg. 4.  The correctness of last three steps is due to Lemma 1 and that the singular values produces by eigSVD is in the ascending order.
\end{proof}

Similarly, we can analyze the \emph{flop} count of the variant fast rPCA algorithm (Alg. 6).
\begin{equation}
\begin{aligned}
\mathrm{FC}_{6}&= (\frac{q}{2}-1)C_{lu}(nl^2-\frac{l^3}{2})+qC_{mul}nnz(A)l+C_{mul}nlk+2C_{mul}(m+n)l^2+2C_{eig}l^3\\
&\approx(\frac{q}{2}-1)C_{mul}nl^2+qC_{mul}nnz(A)l+C_{mul}nlk+2C_{mul}(m+n)l^2.
\end{aligned}
\end{equation}
Now, the difference of flop counts of Alg. 5 and 6 is  
\begin{equation}
\mathrm{FC}_{6} - \mathrm{FC}_{5}=
(\frac{q}{2}-1)C_{lu}(n-m)l^2+C_{mul}(n-m)lk<0,
\end{equation}
if they are performed on $\mathbf{A}\in\mathbb{R}^{m\times n}~(m>n)$.
It means Alg. 6 is more efficient for handling matrix with more rows. 
Accordingly, Alg. 5 is more efficient for handling matrix with more columns.

To evaluate how the proposed fast PCA algorithm accelerates the basic rPCA algorithm, we give the following analysis on the theoretical speedup based on flop counts. As Alg. 1 and Alg. 6 are both efficient for the situation with $m \ge n$, we analyze the ratio of flop counts of them. According to (4) and (13), the speedup ratio of the \emph{frPCAt} algorithm to the basic rPCA algorithm is (assuming that $p=(q-2)/2$):
\begin{equation}
\begin{aligned}
\mathrm{Sp1} &= \frac{\mathrm{FC}_{1}}{\mathrm{FC}_{6}} \\
&\approx \frac{(\frac{q}{2}-1)C_{qr}nl^{2}+\frac{q}{2}C_{qr}ml^2+qC_{mul}nnz(A)l+C_{mul}mlk+C_{svd}nl^2}{{(\frac{q}{2}-1)C_{mul}nl^2+qC_{mul}nnz(A)l+C_{mul}nlk+2C_{mul}(m+n)l^2}}.
\end{aligned}
\end{equation}
Denote $t=nnz(\mathbf{A})/m $ as the average number of nonzeros per row, $\alpha=t/l$ as a sparsity parameter related to the rank parameter, 
and $\beta=n/m$ as a matrix shape parameter ($\beta \le 1$). We further derive:
\begin{equation}
\begin{aligned}
\mathrm{Sp1}\approx \frac{(\frac{q}{2}-1)C_{qr}\beta+\frac{q}{2}C_{qr}+qC_{mul}\alpha+C_{mul}+C_{svd}\beta}{(\frac{q}{2}+1)C_{mul}\beta+qC_{mul}\alpha+C_{mul}\beta+2C_{mul}}.
\end{aligned}
\end{equation}
Based on this, we have the following theorem.

\begin{theoremn}
The speedup ratio of the variant fast PCA algorithm (Alg. 6) to the basic rPCA algorithm (Alg. 1),  Sp1, depends on the number of passes over $\mathbf{A}$ (denoted by $q$), the ratio of average number of nonzeros per row to the rank parameter $l$ (denoted by $\alpha$), and the number of columns over the number of rows (denoted by $\beta$). Sp1 becomes higher as $\alpha$ decreases. And,
\begin{equation}
\begin{aligned}
\lim_{q\to \infty}{\mathrm{Sp1}}= \frac{C_{qr}\beta+C_{qr}+2C_{mul}\alpha}{C_{mul}\beta+2C_{mul}\alpha},
\end{aligned}
\end{equation}
which approaches to $2C_{qr}/C_{mul}$ for a very sparse square matrix $\mathbf{A}$ ($\alpha$ is small and $\beta$ equals 1). Here, $C_{qr}$ and $C_{mul}$ are the constants for the flop counts of QR factorization and matrix-matrix multiplication respectively. 
\end{theoremn}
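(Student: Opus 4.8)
The plan is to regard $\mathrm{Sp1}$ simply as the rational function of $q$, $\alpha$, and $\beta$ written in (16) and to extract all three assertions from that closed form. First I would record how (16) arises: substituting $p=(q-2)/2$ and $l=k+s\approx k$ into the flop counts (4) and (13) produces the expression (15) for $\mathrm{FC}_1/\mathrm{FC}_6$, and then dividing its numerator and denominator by $ml^2$ and introducing $t=nnz(\mathbf{A})/m$, $\alpha=t/l$, and $\beta=n/m$ removes every occurrence of $m$, $n$, $k$, $nnz(\mathbf{A})$, and $l$ in isolation. This establishes the first claim, that $\mathrm{Sp1}$ depends only on $(q,\alpha,\beta)$.

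Next I would handle the monotonicity in $\alpha$. Writing the numerator and denominator of (16) as $N(\alpha)=N_0+qC_{mul}\alpha$ and $D(\alpha)=D_0+qC_{mul}\alpha$, where $N_0$ and $D_0$ collect the terms free of $\alpha$, a one-line differentiation gives $\partial\mathrm{Sp1}/\partial\alpha = qC_{mul}(D_0-N_0)/D(\alpha)^2$. So $\mathrm{Sp1}$ strictly decreases in $\alpha$ iff $N_0>D_0$, and I would verify this inequality by grouping $N_0-D_0 = \frac{q}{2}C_{qr}-C_{mul}+\beta\left[(\frac{q}{2}-1)C_{qr}+C_{svd}-(\frac{q}{2}+2)C_{mul}\right]$ and checking term by term that it is positive for all $q\ge 2$ and $\beta\in(0,1]$ under the paper's standing assumption $C_{mul}<C_{qr}\ll C_{svd}$ (the first summand is already positive since $\frac{q}{2}\ge1$ and $C_{qr}>C_{mul}$, and the bracket is at least $C_{svd}-3C_{mul}>0$). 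This yields ``$\mathrm{Sp1}$ higher for smaller $\alpha$''.

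Finally, for the limit I would divide $N(\alpha)$ and $D(\alpha)$ by $q$ and let $q\to\infty$: the $\alpha$-free remainders contribute $O(1/q)$ and vanish, leaving $\mathrm{Sp1}\to(\frac12 C_{qr}\beta+\frac12 C_{qr}+C_{mul}\alpha)/(\frac12 C_{mul}\beta+C_{mul}\alpha)$, and clearing the common factor $\frac12$ gives exactly (17). Putting $\beta=1$ in (17) produces $(2C_{qr}+2C_{mul}\alpha)/(C_{mul}+2C_{mul}\alpha)$, which tends to $2C_{qr}/C_{mul}$ as $\alpha\to0$, completing the proof of the theorem. The only step that is not pure bookkeeping is the inequality $N_0>D_0$ underlying the monotonicity claim, since it genuinely uses the ordering $C_{mul}<C_{qr}\ll C_{svd}$ of the architecture-dependent constants rather than following formally; one should also keep in mind that the whole statement lives inside the ``$\approx$'' coming from the approximations $k\approx l$, $C_{lu}(ml^2-l^3/2)\approx C_{mul}ml^2$, and $l\ll\min(m,n)$ used to pass from (4) and (13) to (16).
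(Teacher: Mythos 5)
Your proposal is correct and takes essentially the same route as the paper: quotient-rule differentiation of (16) with respect to $\alpha$ (the paper writes the derivative as $qC_{mul}(\mathrm{FC}_6-\mathrm{FC}_1)/\mathrm{FC}_6^2<0$, which is exactly your $N_0>D_0$ condition), followed by retaining only the $O(q)$ terms to get the limit (17) and then setting $\beta=1$, $\alpha\to 0$. Your explicit term-by-term verification that $N_0-D_0>0$ under $C_{mul}<C_{qr}\ll C_{svd}$ is slightly more careful than the paper, which simply asserts $\mathrm{FC}_6<\mathrm{FC}_1$ at that point.
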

\begin{proof}
Firstly, based on (15) and (16), we can derive the derivative of Sp1 with respect to $\alpha$:
\begin{equation}
\frac{\partial\mathrm{Sp1}}{\partial\alpha}=\frac{qC_{mul}(\mathrm{FC_6}-\mathrm{FC_1})}{\mathrm{FC_6}^2}<0,
\end{equation}
where $\mathrm{FC}_1$ and $\mathrm{FC}_6$ represents the \emph{flop} counts of Alg. 1 and 6, respectively. This means Sp1 increases with the decrease of $\alpha$ (which means matrix becomes sparser). 

Then, when $q$ is sufficiently large, we can derive:
\begin{equation}
\mathrm{Sp1}\approx\frac{(\frac{q}{2}-1)C_{qr}\beta+\frac{q}{2}C_{qr}+qC_{mul}\alpha}{(\frac{q}{2}+1)C_{mul}\beta+qC_{mul}\alpha} \approx \frac{C_{qr}\beta+C_{qr}+2C_{mul}\alpha}{C_{mul}\beta+2C_{mul}\alpha},
\end{equation}
which results in (17). Finally, if $\alpha \to 0$ and $\beta=1$ the speedup ratio approaches to $2C_{qr}/C_{mul}$. This is the upper bound of the speedup for a square or approximately square $\mathbf{A}$, absolutely greater than 1.
\end{proof}

A similar theorem can be derived for Alg. 5. With the theorems, we see that the proposed fast rPCA algorithm accelerates the basic rPCA algorithm without loss of accuracy. Besides, it allows odd number of passes over matrix $\mathbf{A}$, providing better trade-off between runtime and accuracy. 

\section{Experiments}

All experiments are carried out on a Linux server with two 12-core Intel Xeon E5-2630 CPUs (2.30 GHz), and 32 GB RAM. The proposed algorithms Alg. 5 and Alg. 6 are implemented in C with MKL libraries \citep{Intel} and OpenMP derivatives for multi-thread computing. QR factorization, LU factorization and other basic linear algebra operations are realized through LAPACK routines which are automatically executed in parallel on the multi-core CPUs. \texttt{svds} in Matlab2016b are used as accurate truncated SVD. eigSVDs is the other algorithm to compare and is efficiently implemented in Matlab2016b. Because the \texttt{lansvd} in Matlab/Fortran is not well parallelized, it runs slower than \texttt{svds} in our experiments.  And, considering that $k\ll \min (m, n)$, the method  calculating all the singular values/vectors by eigSVD and then making truncation is not competitive in runtime. Therefore, we do not include \texttt{lansvd} and eigSVD in the comparisons.

In the experiments, we choose Alg. 5 or 6 as the proposed fast algorithm according to the shape of test matrix. The oversampling parameter is always set $s=5$, and all runtimes are in seconds.



\subsection{Accuracy and Efficiency Validation}
In this subsection, we test different sparse matrices from real data and show how the performance of the proposed algorithm is affected by various factors.

Firstly, 
we obtain a sparse matrix in size 90,230 $\times$ 45,115 from the MovieLens dataset \citep{movielens}. The matrix has 97 nonzeros per row on average and is denoted by Matrix 1. Then, we randomly set some nonzero elements to zero to get two sparser matrices: Matrix 2 and 3 with 24 and 9 nonzeros per row on average, respectively. We also obtain three matrices (Matrix 4-6) from the information retrieval application ``AMiner'' \citep{Aminer}. They are in size 647,789 $\times$ 323,896, and have about 16, 8 and 4 nonzeros per row on average, respectively. We compute the first 100 singular values and singular vectors with \texttt{svds}, eigSVDs, Alg. 1 (setting $p=5$) and Alg. 6 (setting $q=11$). The runtimes are listed in Table 1 and the computed singular values are drawn in Figure 1. We use ``Sp2'' to denote the speedup ratio to \texttt{svds}, which is different from Sp1.

\begin{table}[h]
 \setlength{\abovecaptionskip}{0.05 cm}
 \caption{The runtimes of different PCA algorithms for matrices with different sparsity.}
  \label{tab:table2}
  \centering
\small{
\begin{spacing}{0.9}
\renewcommand{\multirowsetup}{\centering}
  \begin{tabular}{@{}c@{~~}c@{~~}c@{~~}c@{~~}c@{~~}c@{~~}c@{~~}c@{~~}c@{~~}c@{~~}c@{~~}c@{~~}c@{~~}c@{}} 
  \toprule
    \multirow{2}{*}{Algorithm}  &
  \multicolumn{2}{c}{Matrix 1} & \multicolumn{2}{c}{Matrix 2} &\multicolumn{2}{c}{Matrix 3} &\multicolumn{2}{c}{Matrix 4} &\multicolumn{2}{c}{Matrix 5} &\multicolumn{2}{c}{Matrix 6}\\
  \cmidrule(r){2-3} \cmidrule(r){4-5} \cmidrule(r){6-7} \cmidrule(r){8-9} \cmidrule(r){10-11} \cmidrule(r){12-13}
    & time & Sp2 & time & Sp2 & time & Sp2 & time & Sp2 & time & Sp2 & time & Sp2 \\
\midrule
 svds & 36.0 & * & 25.5 & * & 21.0 & * & 178.9 & * & 149.5 & * & 131.1 & *\\
 eigSVDs & 459.2 & 0.1 & 104.6 & 0.2 & 37.2 & 0.6 & 278.7 & 0.6 & 156.2 & 1.1 & 75.7 & 1.7\\
 Alg.1 ($p=5$) & 13.1 & 2.8  &   10.0 & 2.5  &  9.76  &  2.2  & 99.9 & 1.8 & 90.7  & 1.6 & 84.5 & 1.6 \\
 Alg.6 ($q=11$) & 4.32 & \textbf{8.3} & 1.58 &  \textbf{16}  &  1.05 & \textbf{20}  & 17.2 & \textbf{10}  & 13.8  & \textbf{11} & 10.2 & \textbf{13} \\
 \bottomrule 
 \end{tabular}
 \end{spacing}
 }
\end{table}

\begin{figure}[h]
  \centering
  \setlength{\abovecaptionskip}{0.02 cm}
  \subfigure[Matrix 1] {\includegraphics[width=1.8in, height=1.8in]{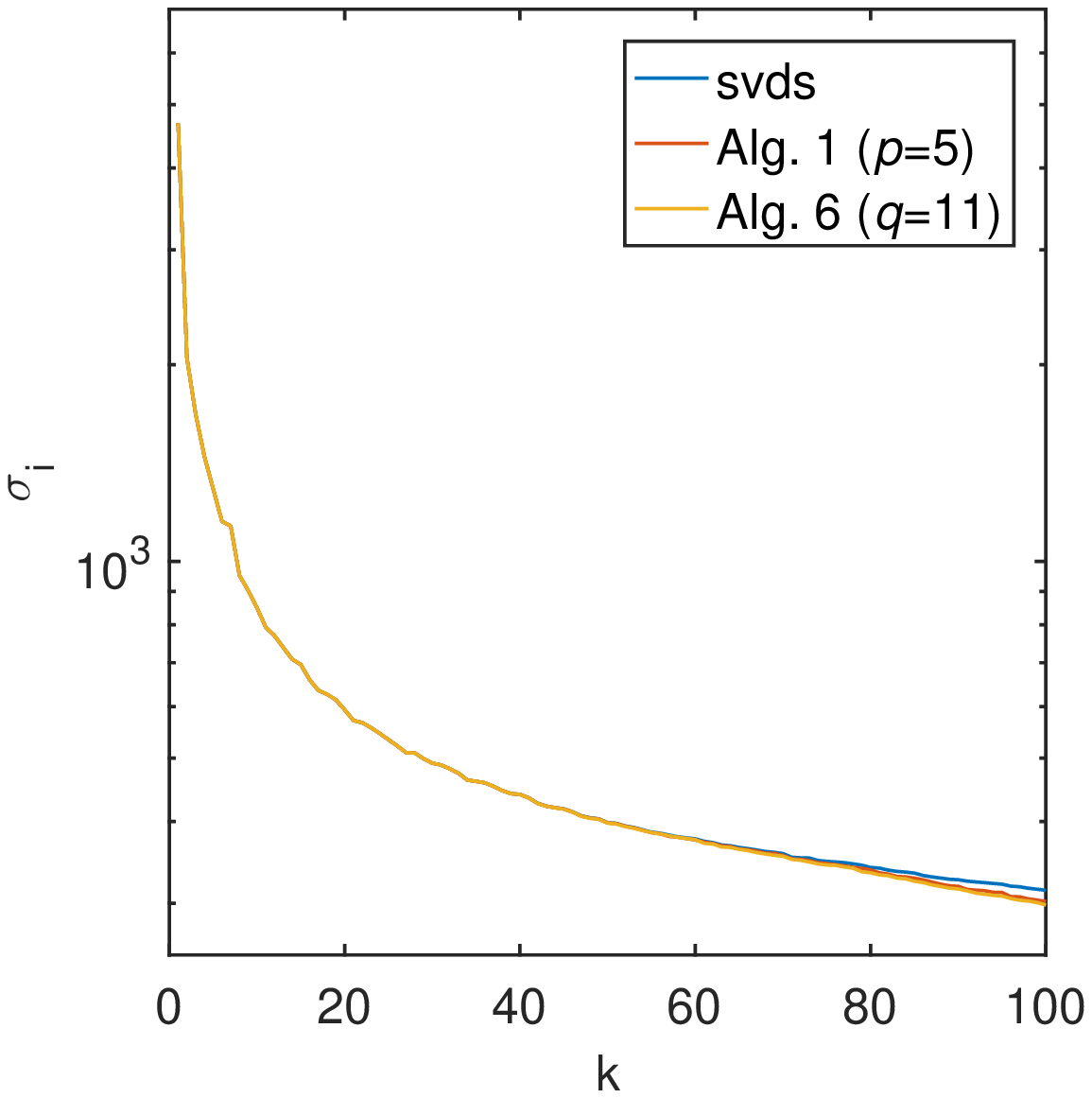}}
  \subfigure[Matrix 2] {\includegraphics[width=1.8in, height=1.8in]{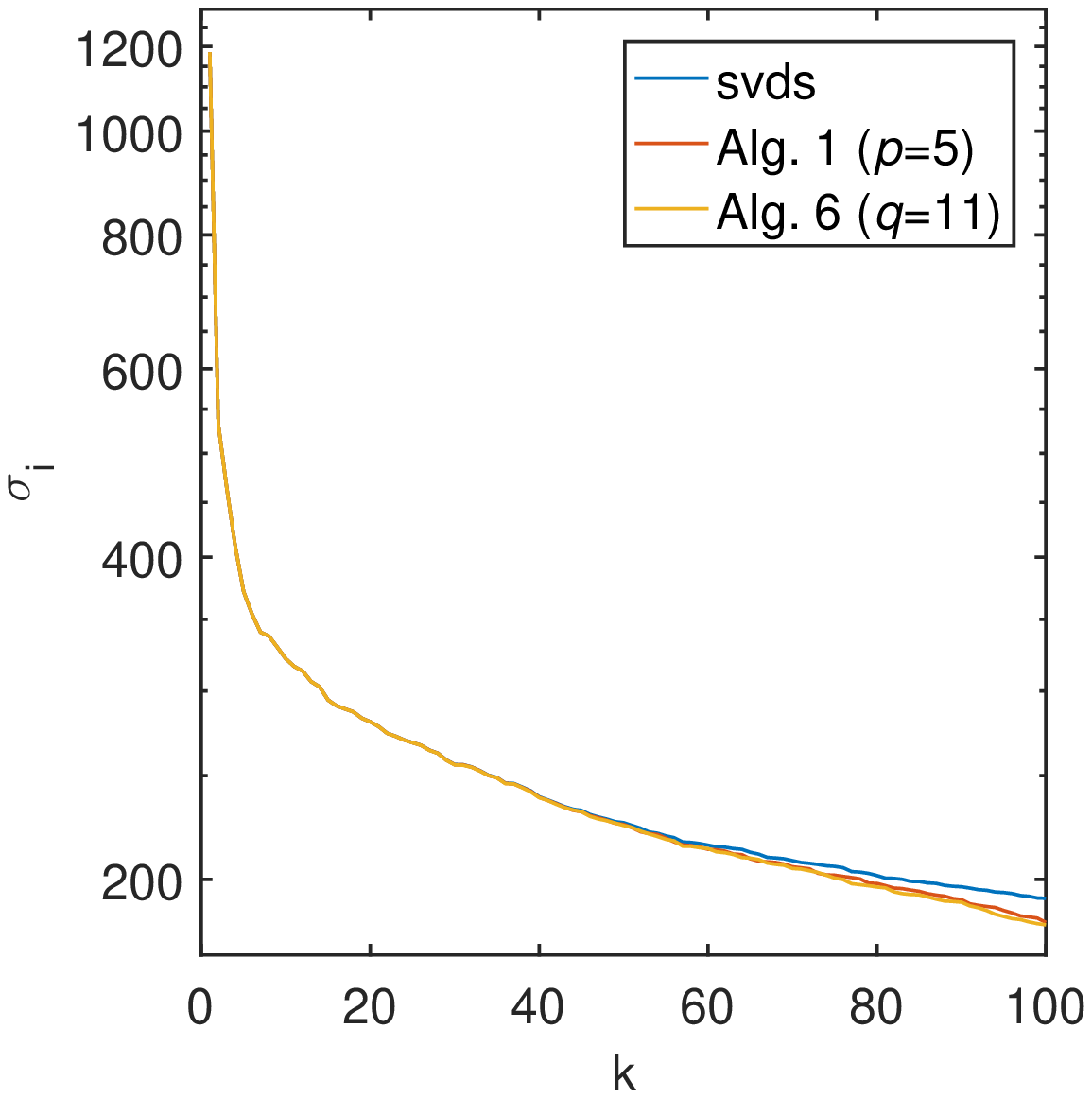}}
  \subfigure[Matrix 3] {\includegraphics[width=1.8in, height=1.8in]{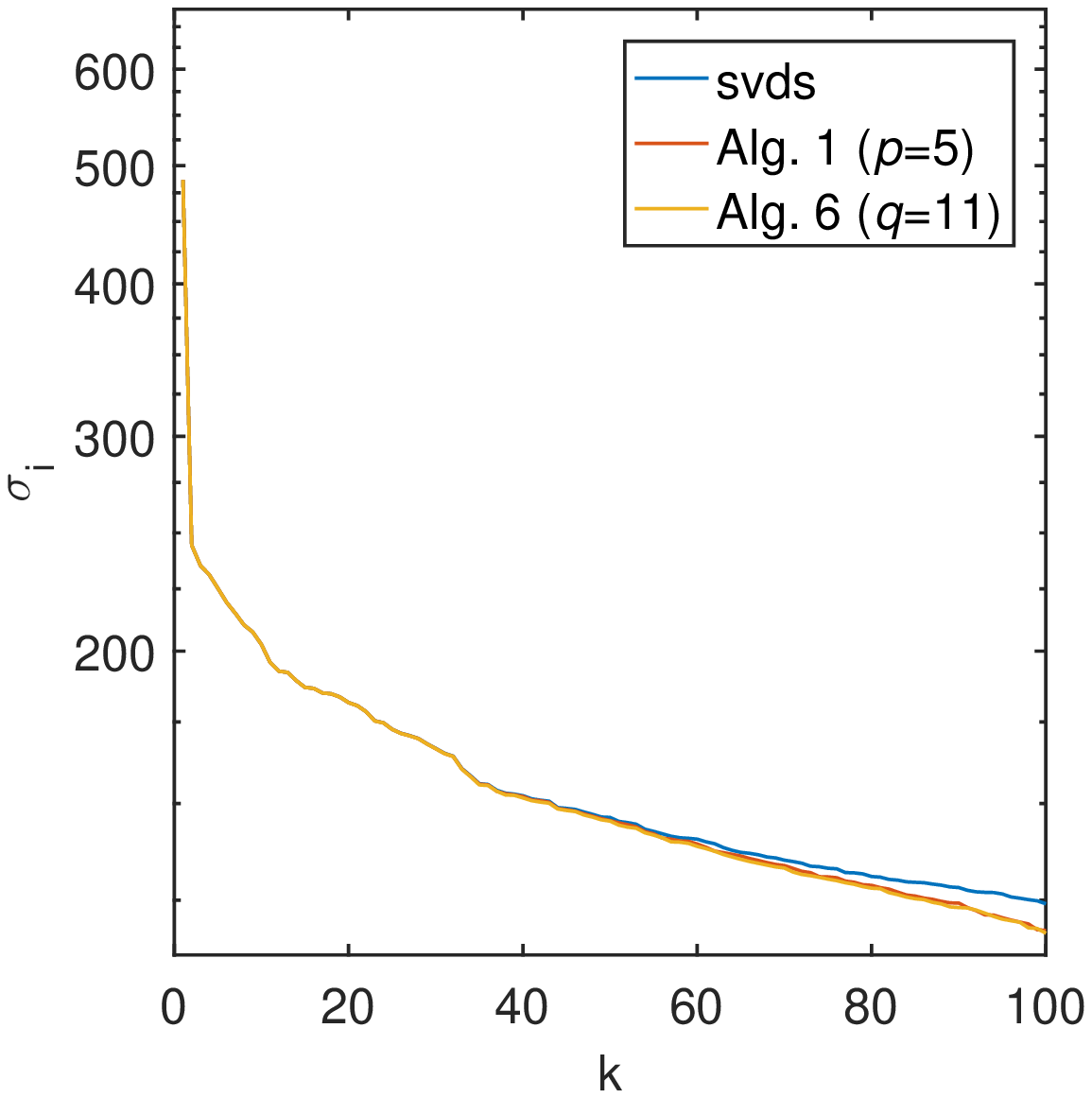}}
  \subfigure[Matrix 4] {\includegraphics[width=1.8in, height=1.8in]{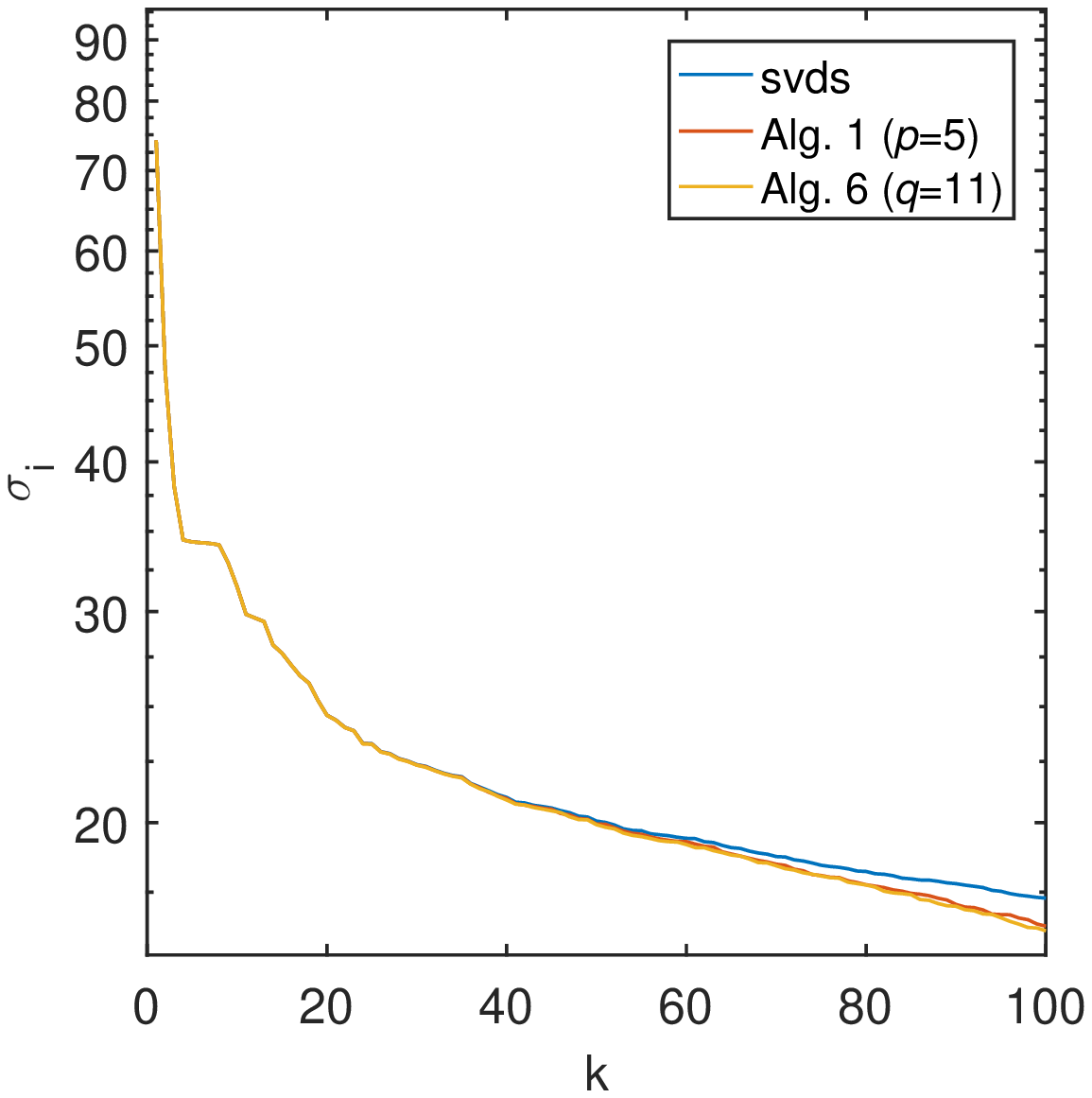}}
  \subfigure[Matrix 5] {\includegraphics[width=1.8in, height=1.8in]{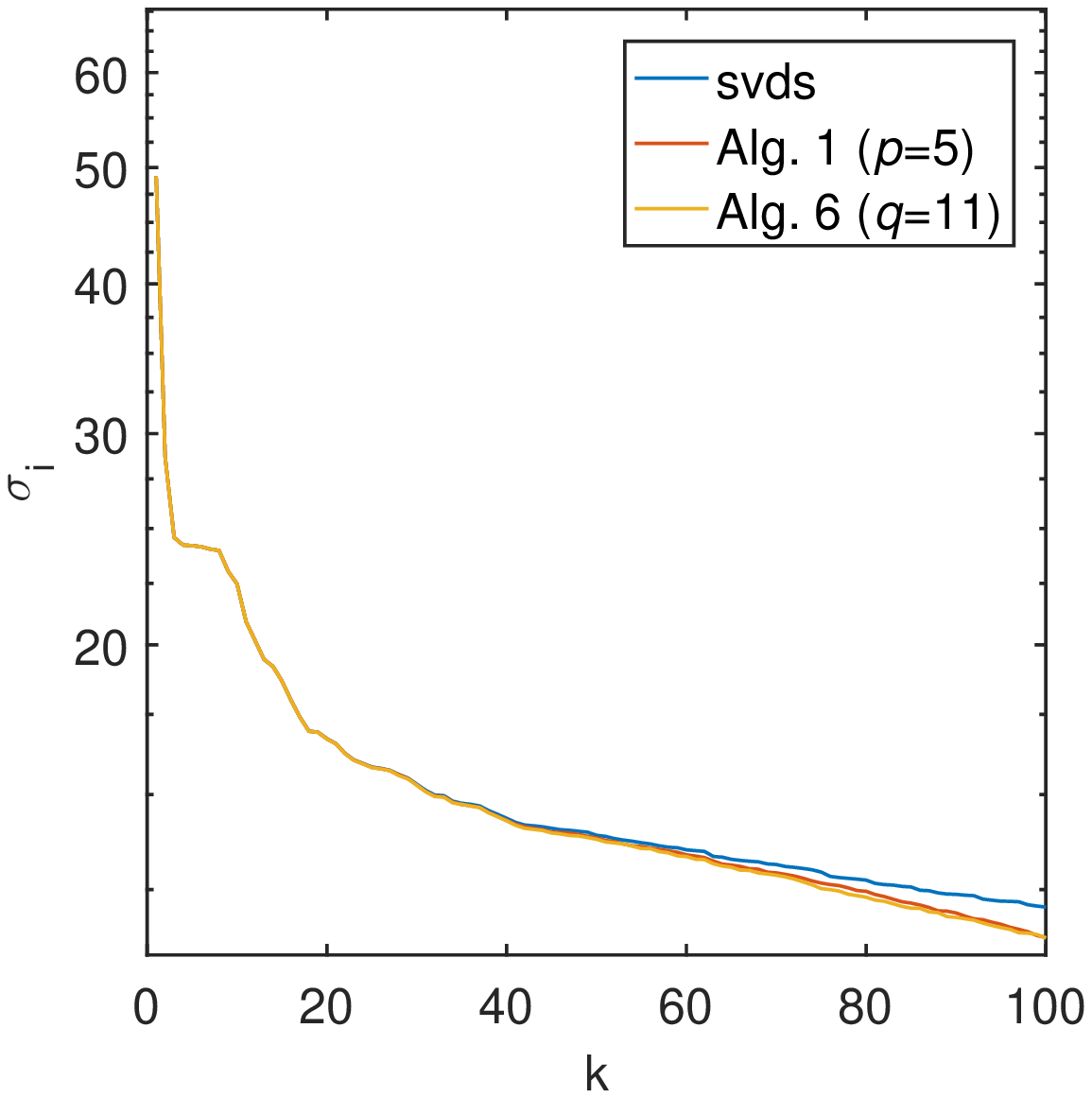}}
  \subfigure[Matrix 6] {\includegraphics[width=1.8in, height=1.8in]{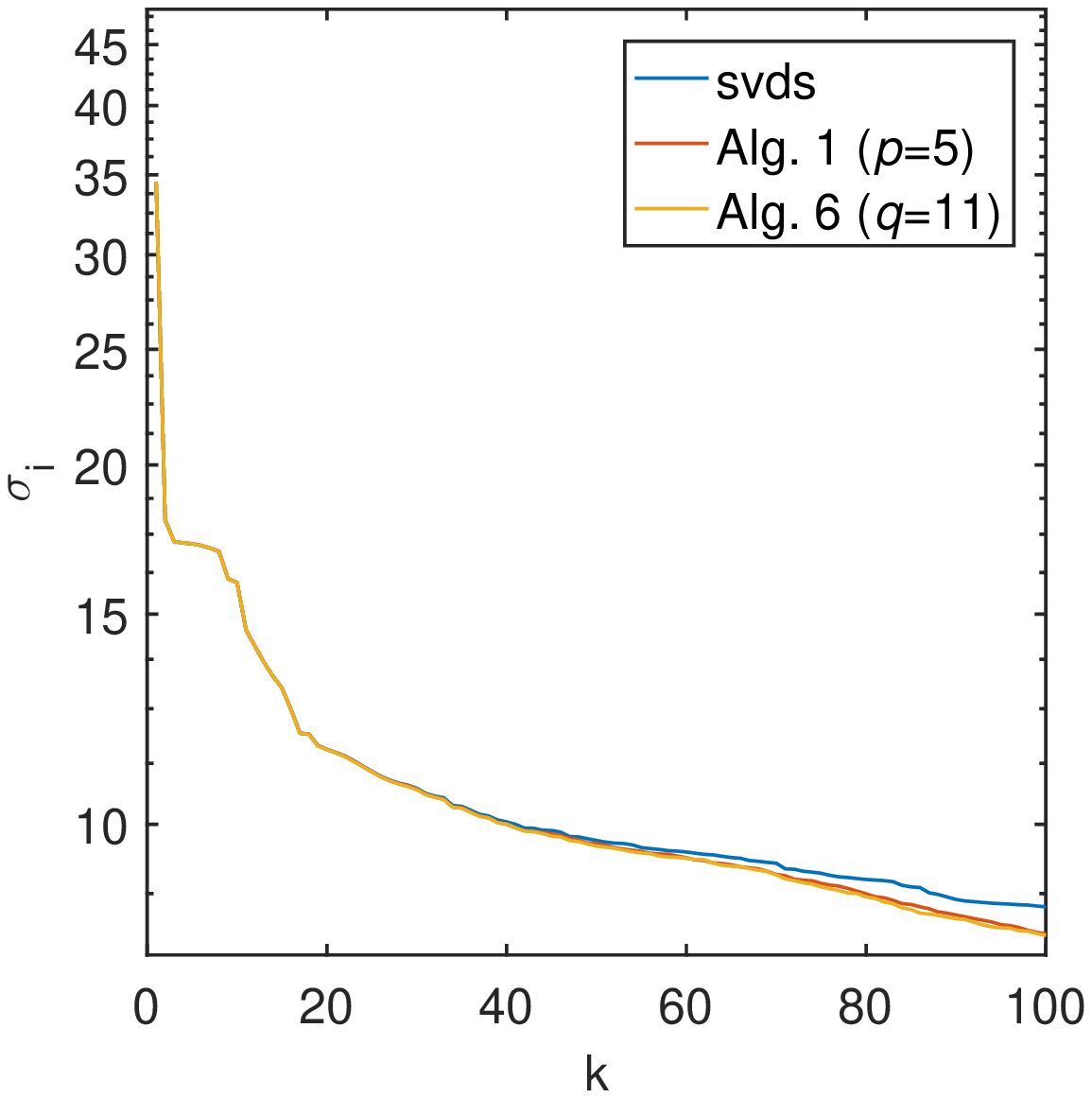}}
  \caption{The computed singular values for different matrices, showing the accuracy of our algorithm.}
  \label {Fig1}
\end{figure}

\begin{figure}[h]     
  \centering
  \setlength{\abovecaptionskip}{0.02 cm}
  \subfigure[] {\includegraphics[width=3.6in, height=1.8in]{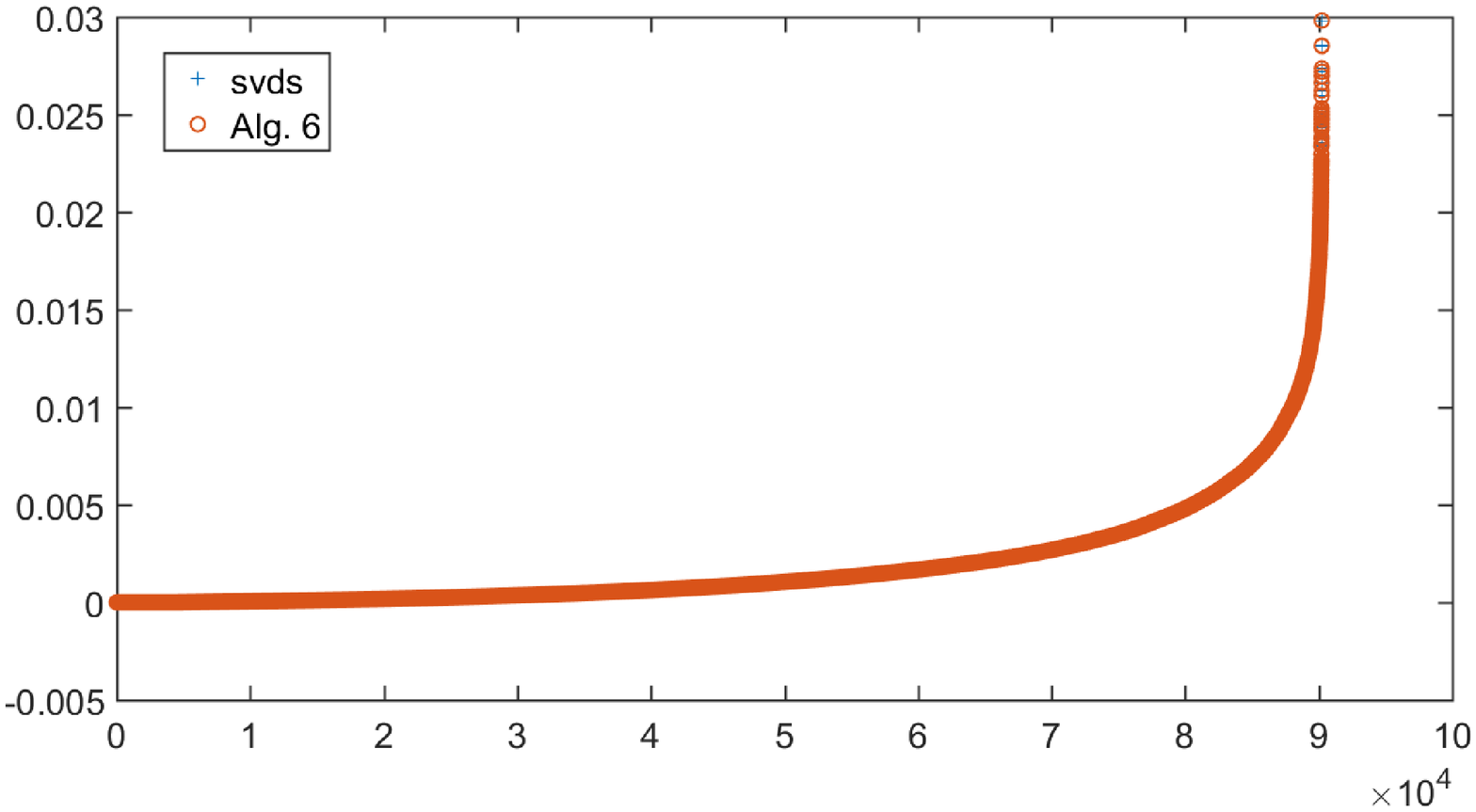}}
  \subfigure[] {\includegraphics[width=1.8in, height=1.8in]{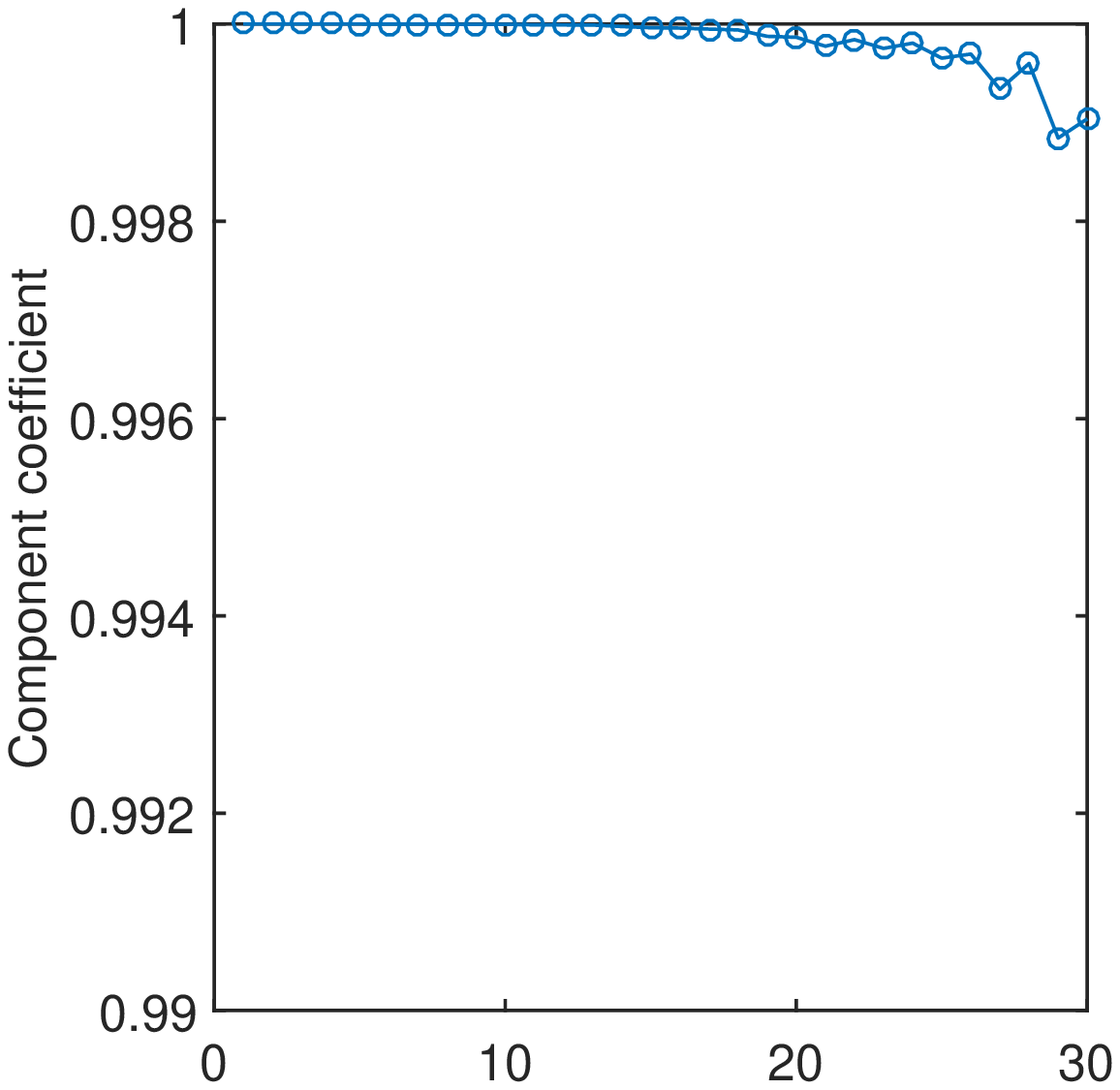}}
  \caption{The accuracy of Alg. 6 ($q=11$) on principal components of Matrix 2 (with comparison to the results from \texttt{svds}). (a) The numeric values (sorted) of first principal component.  (b) The correlation coefficients for the first 30 principal components.}
  \label {Fig3}
\end{figure}

Table 1 shows that the speedup ratio of Alg. 6 increases when $nnz(\mathbf{A})$ decrease, no matter compared with \texttt{svds} or to Alg. 1. The proposed algorithm is up to \textbf{20X} faster than \texttt{svds} and \textbf{9.1X} faster than the basic rPCA algorithm (both achieved on Matrix 3). In Fig. 1, the curves of eigSVDs is not shown, as they are indistinguishable to those of \texttt{svds}. From the figure, we see that the randomized PCA algorithms are indistinguishable from \texttt{svds} at the first tens of singular values. Alg. 6 is also indistinguishable from Alg. 1. This validates the effectiveness of the proposed algorithm with odd number of passes over $\mathbf{A}$. 

Fig. 2(a) shows the first principal component (i.e. $\mathbf{u}_1$) of Matrix 2 computed by \texttt{svds} and Alg. 6, which looks indistinguishable (only $1.4\times10^{-10}$ difference in $l_\infty$-norm). For other principal components, we calculate the correlation coefficient between the results obtained with the both methods. As shown in Fig. 2(b), the correlation coefficients are close to 1. The largest deviation occurs for the 29th principal component, with value 0.9988.

Secondly, we test the randomized algorithms with different $q$ and $p$ parameters. $q=2,4,6,9,11$  and $p=0,1,2,4,5$ are set to Alg. 6 and Alg. 1 respectively. The runtimes of the both algorithms are listed in Table 2, for computing the first 100 principal components of Matrix 2.
\begin{table}[h]
 \setlength{\abovecaptionskip}{0.05 cm}
 \caption{The runtimes of the basic rPCA algorithm and the proposed algorithm with different $q$ values.}
  \label{tab:table2}
  \centering
\small{
\begin{spacing}{0.9}
\renewcommand{\multirowsetup}{\centering}
  \begin{tabular}{@{}c@{~~}c@{~~}c@{~~}c@{~~}c@{~~}c@{~~}c@{~~}c@{~~}c@{~~}c@{~~}c@{~~}c@{}} 
  \toprule
    \multirow{2}{*}{Algorithm}  &
  \multicolumn{2}{c}{$q=2$} & \multicolumn{2}{c}{$q=4$} &  \multicolumn{2}{c}{$q=6$}  &\multicolumn{2}{c}{$q=9$} &\multicolumn{2}{c}{$q=11$}\\
  \cmidrule(r){2-3} \cmidrule(r){4-5} \cmidrule(r){6-7} \cmidrule(r){8-9} \cmidrule(r){10-11}
    & time & Sp1 & time & Sp1 & time & Sp1 & time & Sp1 & time & Sp1\\
\midrule
 Alg.1 ($p=\lfloor (q-1)/2\rfloor$) & 2.03 & * & 3.70 & * & 5.32 & * & 8.48 & * & 10.0 & * \\
 Alg.6 & 0.69 & 2.9 & 0.94 & 3.9  & 1.04 & 5.1 & 1.32 & 6.4 & 1.58 & 6.3 \\
\bottomrule 
 \end{tabular}
 \end{spacing}
 }
\end{table}

From the table, we see that the speedup ratio increases with $q$. At the same time, we plot Fig. 3 to show the  curves of computed singular values. From it we see with $q$ or $p$ increasing the singular values approach to the accurate values. And, since the proposed algorithm allows odd number of passes over $\mathbf{A}$, it has better flexibility. If lower accuracy is allowed, the \emph{frPCAt} algorithm runs much faster. For example,  setting $q=4$, it's actually 27X faster than \texttt{svds}.

\begin{figure}[h]         
  \centering
  \setlength{\abovecaptionskip}{0.02 cm}
  \subfigure[frPCAt ($q=2,4,6,9,11$)] {\includegraphics[width=2in, height=2in]{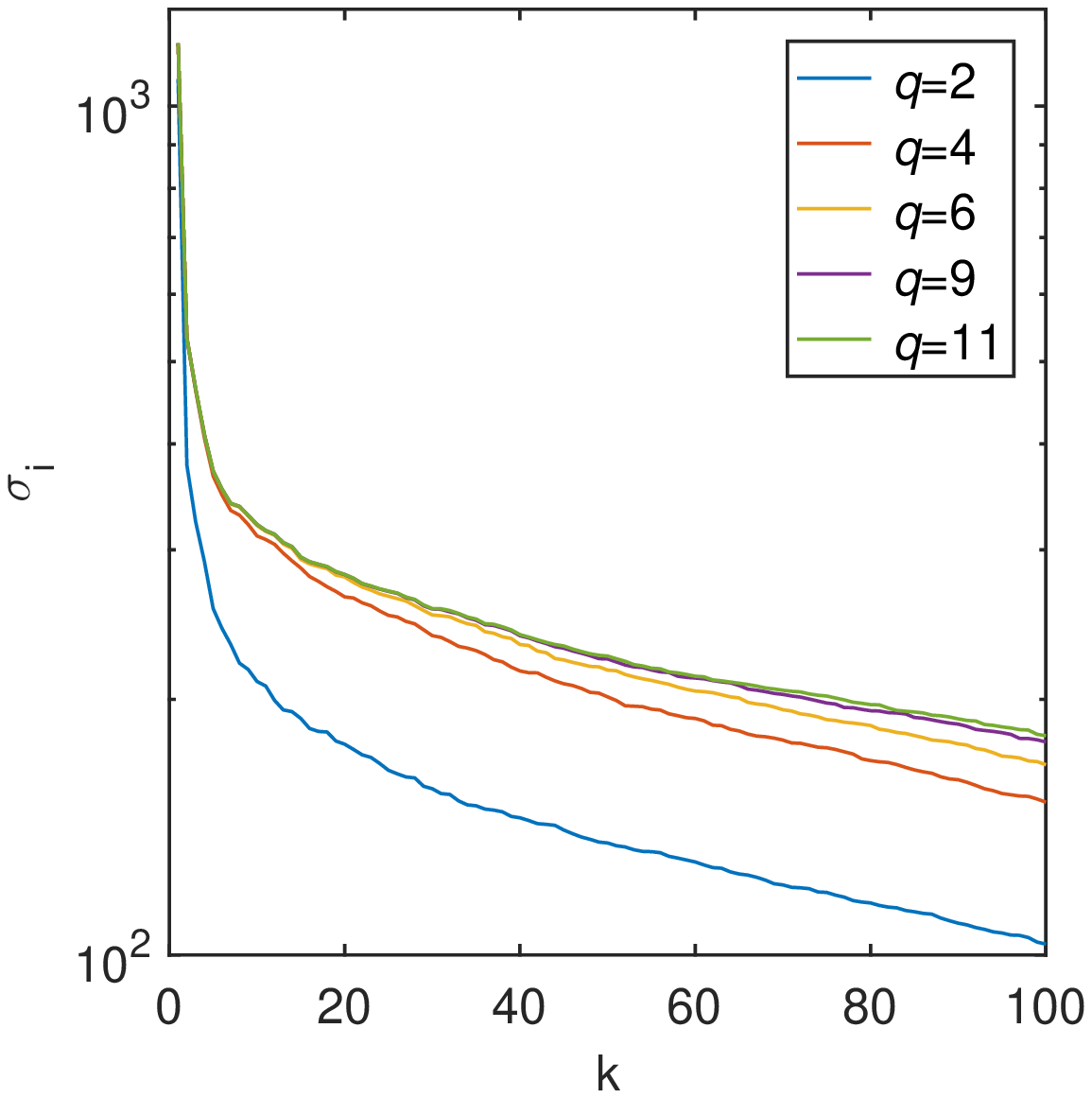}}
  \subfigure[basic rPCA ($p=0,1,2,4,5$)] {\includegraphics[width=2in, height=2in]{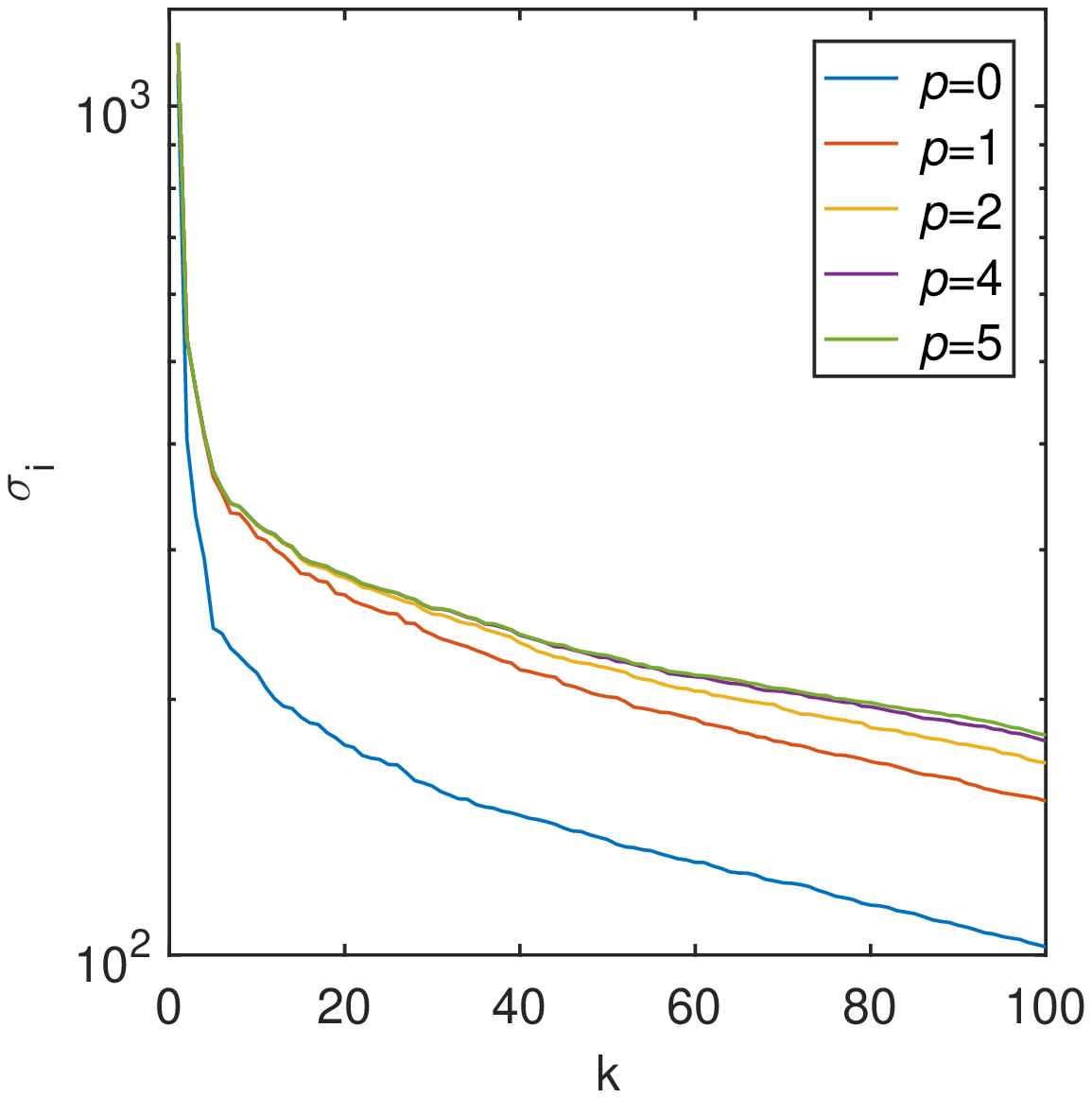}}
  \caption{The computed singular values of basic rPCA algorithm and proposed algorithm with different $q$ values ($p=\lfloor (q-1)/2\rfloor$).}
  \label {Fig2}
\end{figure}

Lastly, we construct matrices by modifying the dimensions of matrix. We only keep the first 107,966 columns of Matrix 6 to obtain Matrix 7  and the first 107,966 rows of Matrix 6 to get Matrix 8. For them we test different PCA algorithms. The results are listed in Table 3.
From it we see that eigSVDs (Alg. 3) is more efficient than \texttt{svds} when $m$ is much larger than $n$. And, Alg. 5 runs faster than Alg. 6, if when $m<n$. This validates the analysis in Section 3.3.
\begin{table}[h]
 \setlength{\abovecaptionskip}{0.05 cm}
 \caption{The runtimes of different PCA algorithms for matrices with different dimensions.}
  \label{tab:table2}
  \centering
\small{
\begin{spacing}{0.9}
\renewcommand{\multirowsetup}{\centering}
  \begin{tabular}{@{}c@{~~}c@{~~}c@{~~}c@{~~}c@{~~}c@{~~}c@{~~}c@{}} 
  \toprule
    \multirow{3}{*}{Algorithm}  &
  \multicolumn{2}{c}{Matrix 6} & \multicolumn{2}{c}{Matrix 7} & \multicolumn{2}{c}{Matrix 8} \\
  & \multicolumn{2}{c}{647,789$\times$323,896} & \multicolumn{2}{c}{647,989$\times$107,966} & \multicolumn{2}{c}{107,966$\times$323,896} \\
  \cmidrule(r){2-3} \cmidrule(r){4-5} \cmidrule(r){6-7}
    & time & Sp2 & time & Sp2  & time & Sp2  \\
\midrule
 svds     & 131.1& *  & 100.4 & *   &  51.3 & * \\
 eigSVDs & 75.7 & 1.7   & 16.1 & 6.3  & 47.0 & 1.1 \\
 Alg.1 ($p=5$)   & 84.5 & 1.6 & 59.6 & 1.7 & 35.6 & 1.4 \\
 Alg.5 ($q=11$) & 14.2 & 12 & 7.19 & 14  &  2.78 & \textbf{18}  \\
 Alg.6 ($q=11$)  & 10.2 & \textbf{13} & 5.91 & \textbf{18} & 3.79 & 14  \\
 \bottomrule 
 \end{tabular}
 \end{spacing}
 }
\end{table}

\subsection{Results on Real Large Data}
In this subsection, we test the proposed algorithm with three large real datasets. The first  one is a large matrix from MovieLens in size 270,896 $\times$ 45,115 with 97 nonzeros per row on average. The Aminer person-keyword matrix is the second dataset, in the largest size 12,869,521 $\times$ 323,899, with 16 nonzeros per row on average. The last one is a social network matrix from SNAP \citep{snapnets} in size 82,168 $\times$ 82,168 with 12 nonzeros per row on average. We computed the first 100 principal components/directions. The runtimes of different algorithms are listed in Table 4.


\begin{table}[h]
 \setlength{\abovecaptionskip}{0.05 cm}
 \caption{The runtimes of different PCA algorithms for real large sparse matrices.}
  \label{tab:table2}
  \centering
\small{
\begin{spacing}{0.9}
\renewcommand{\multirowsetup}{\centering}
  \begin{tabular}{@{}c@{~~}c@{~~}c@{~~}c@{~~}c@{~~}c@{~~}c@{~~}c@{}} 
  \toprule
    \multirow{2}{*}{Sparse Data}  &
  svds & eigSVDs & Alg. 1 ($p=5$) &\multicolumn{3}{c}{Alg. 6 ($q=12$)} \\
  \cmidrule(r){2-2} \cmidrule(r){3-3} \cmidrule(r){4-4} \cmidrule(r){5-7}
    & time & time & time & time & Sp1 & Sp2 \\
\midrule
MovieLens & 108.4 & 566.2 & 34.8 & 12.5 & 2.8 & 8.5\\
Aminer & * & * & 1448.3 & 398.7 & 3.6 & * \\
SNAP & 22.7 & 124.4 & 14.8 & 1.74 & 8.7 & 13 \\
\bottomrule 
 \end{tabular}
 \end{spacing}
 }
\end{table}

\begin{figure}[h]         
  \centering
  \setlength{\abovecaptionskip}{0.02 cm}
  \subfigure[MovieLens] {\includegraphics[width=1.8in, height=1.8in]{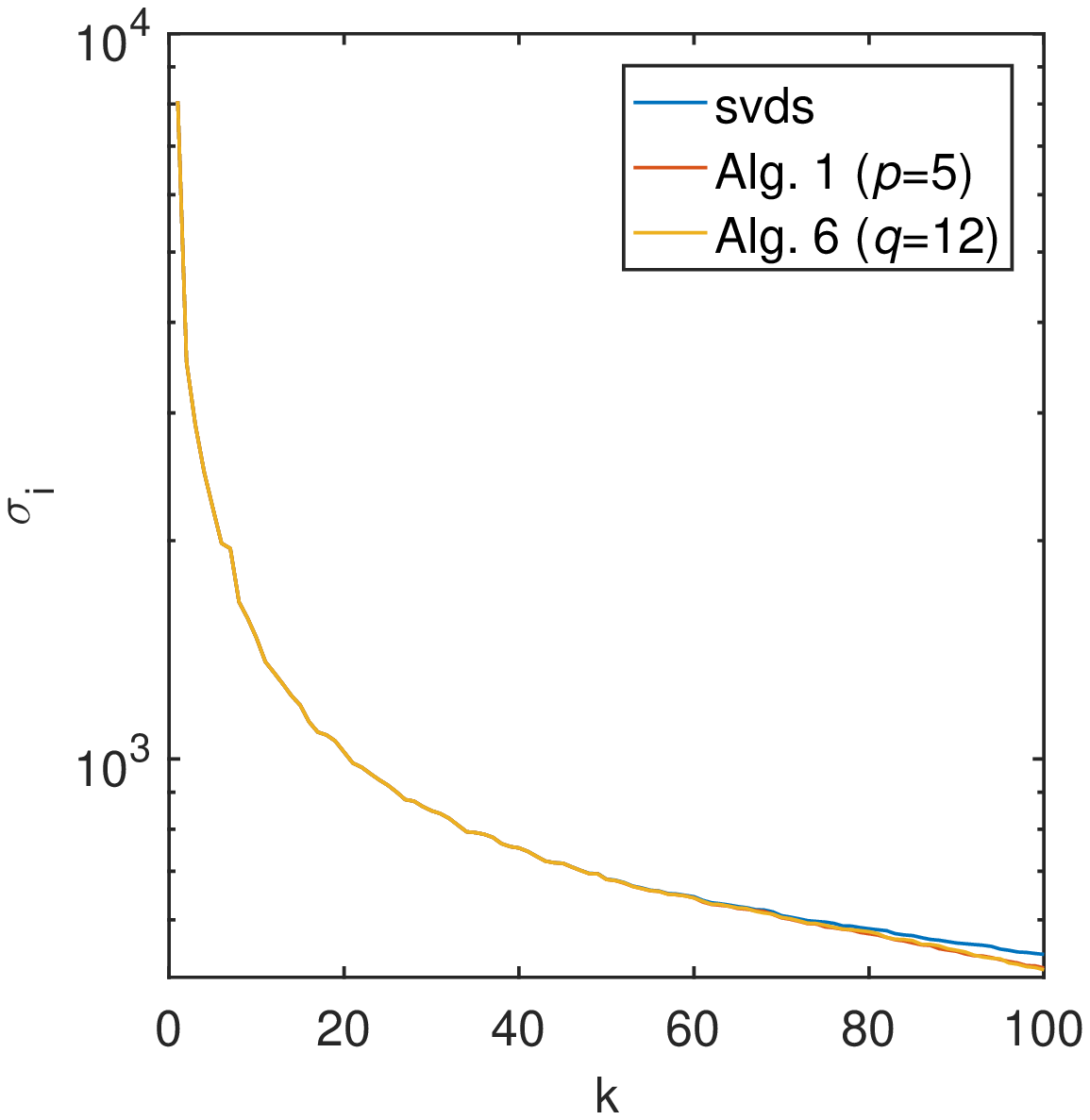}}
  \subfigure[Aminer] {\includegraphics[width=1.8in, height=1.8in]{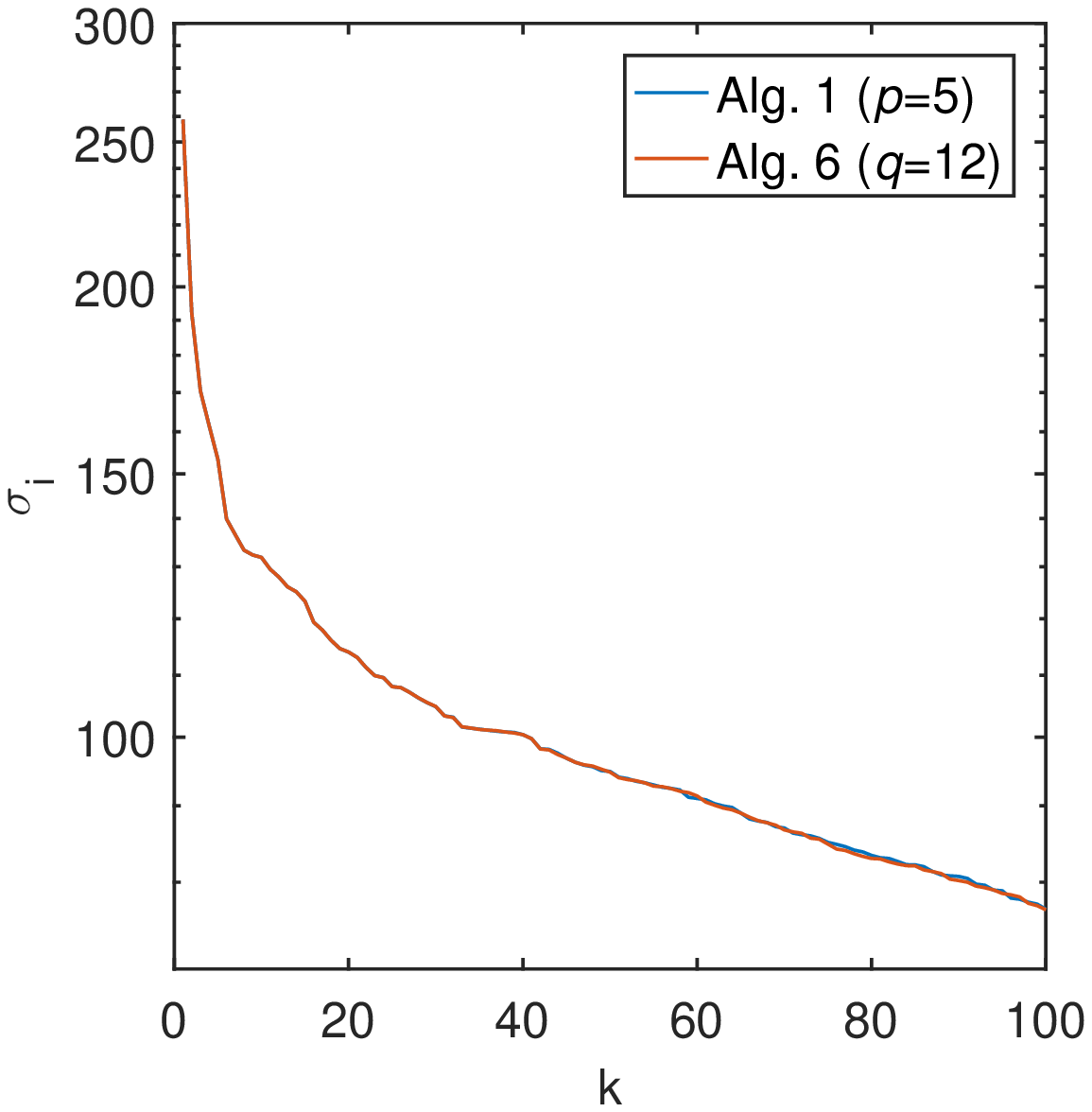}}
  \subfigure[SNAP] {\includegraphics[width=1.8in, height=1.8in]{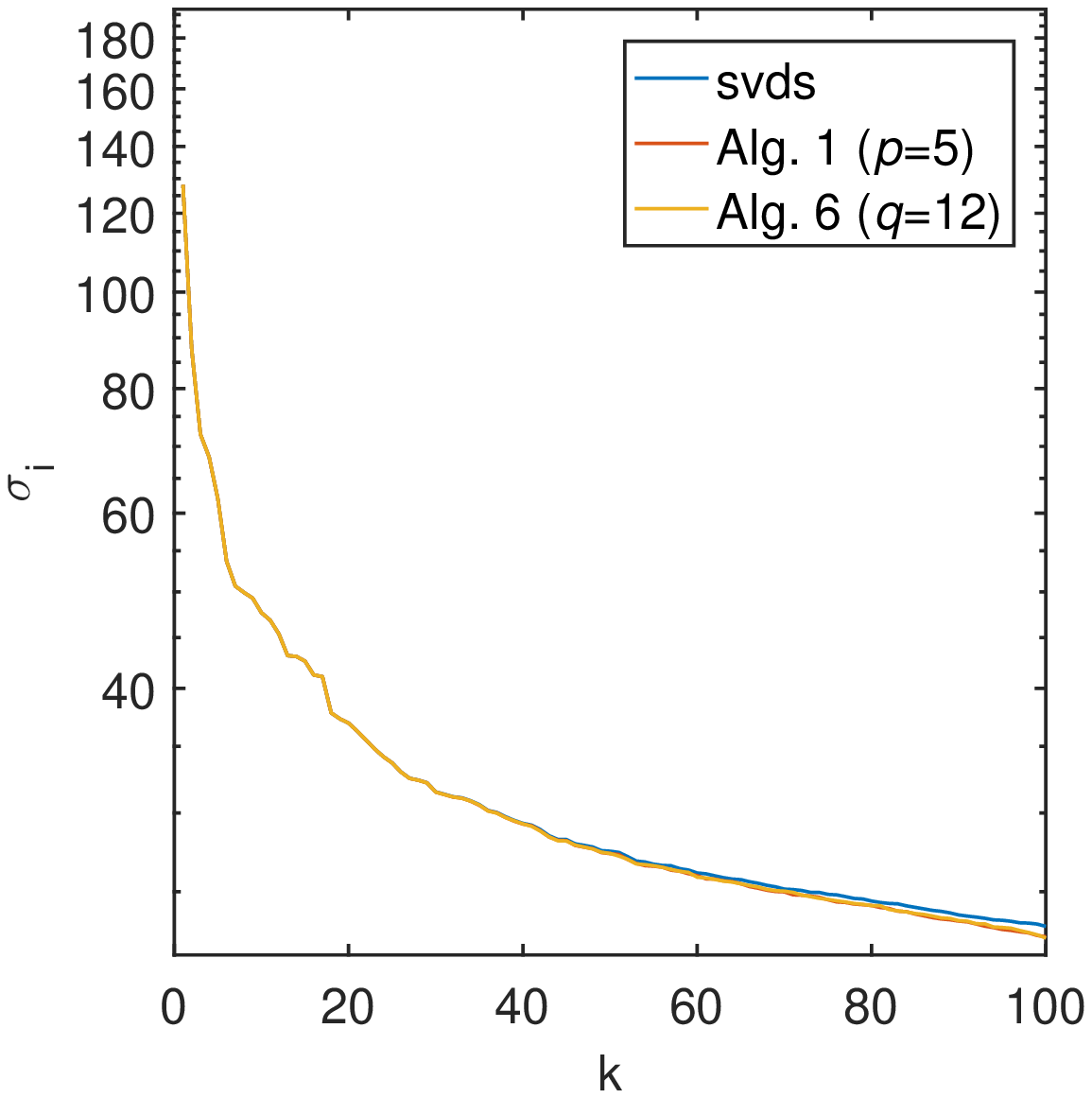}}
  \caption{The computed singular values of MovieLens, Aminer and SNAP matrix.}
  \label {Fig4}
\end{figure}

In the experiment, \texttt{svds} and eigSVDs failed on the Aminer dataset due to the limit of memory. The rPCA algorithms performed very well. The speedup ratio of Alg. 6 to  \texttt{svds} is up to 13X. Assuming that $C_{qr}=5C_{mul}$ in practice,  we can see the Sp1 on MovieLens and SNAP are 3.7 and 8.3 according to Eq. (19). They approximate the Sp1 in Table 4, which validates the analysis in Theorem 3. 
In Figure 4, we plot the computed singular values, showing the good accuracy of the proposed algorithm.  The memory costs of \texttt{svds}, Alg. 1 and Alg. 6 are 1.1 GB, 1.0 GB and 0.87 GB on MovieLens and 0.58 GB, 0.55 GB and 0.35 GB on SNAP, respectively. They suggest that the rSVD algorithms need less memory than \texttt{svds}. For the largest dataset, Aminer, the memory costs of Alg. 1 and Alg. 6 are 25 GB and 23GB, respectively, while \texttt{svds} fails due to out-of-memory.

\section{Conclusions}
A fast randomized PCA algorithm including several techniques is proposed for sparse matrix. It is faster than \texttt{svds} and the basic rPCA algorithm. Its speedup ratio is up to 20X to \texttt{svds} and 9.1X to the basic rPCA algorithm. 
On real data from information retrieval, recommender system and network analysis, the proposed frPCA algorithm performs well, while \texttt{svds} and eigSVDs algorithm may fail due to large memory cost. The frPCA algorithm runs up to 13X faster than \texttt{svds} and 8.7X faster than basic rPCA algorithm for the network analysis dataset with little accuracy loss.
\bibliography{feng18}

\end{document}